\documentclass{article}
\def\bs{\ensuremath\boldsymbol}

\usepackage{PRIMEarxiv}

\usepackage[utf8]{inputenc} 
\usepackage[T1]{fontenc}    
\usepackage{hyperref}       
\usepackage{url}            
\usepackage{booktabs}       
\usepackage{amsfonts}       
\usepackage{nicefrac}       
\usepackage{centernot}
\usepackage{microtype}      
\usepackage{lipsum}
\usepackage{amsrefs}
\usepackage[ruled,vlined]{algorithm2e}
\usepackage{amsmath, amsthm, amssymb, amsfonts,ulem, multicol,cancel,mathtools,esint}
\usepackage{cleveref}
\usepackage{fancyhdr}       
\usepackage{graphicx}       
\graphicspath{{media/}}     

\newtheorem{theorem}{Theorem}[section]

\newtheorem{lemma}{Lemma}[section]

\newtheorem{example}{Example}[section]

\newtheorem{assumption}{Assumption}[section]

\newtheorem{defn}{Definition}[section]
\newtheorem{rem}{Remark}[section]
\newtheorem{prop}{Proposition}[section]
\newtheorem{cor}{Corollary}[section]

\pagestyle{fancy}
\thispagestyle{empty}
\rhead{ Local Analysis for SGD in non-convex optimization with non-isolated local minima} 


\title{A Local Convergence Theory for the Stochastic Gradient Descent  Method in Non-Convex Optimization with non-isolated local minima
\thanks{\textit{\underline{Citation}}: 
\textbf{Authors. Title. Pages.... DOI:000000/11111.}} 
}

\author{
  Taehee Ko,\\
  Department of Mathematics \\
  Pennsylvania State University\\
  State College\\
  \texttt{tuk351@psu.edu} \\
   \And
  Xiantao Li \\
  Department of Mathematics \\
  Pennsylvania State University\\
  State College\\
  \texttt{xxl12@psu.edu} \\
}

\begin{document}
\maketitle

\begin{abstract}
Loss functions with non-isolated minima have emerged in several machine
learning problems, creating a gap between theory and practice. In this paper, we formulate a new type of local convexity condition that is suitable to describe the behavior of loss functions near non-isolated minima. We show that such condition is general enough to encompass many existing conditions. In addition we study the local convergence of the SGD under this mild condition by adopting the notion of stochastic stability. The corresponding concentration inequalities from the convergence analysis help to interpret the empirical observation from some practical training results. 
\end{abstract}

\keywords{Stochastic Gradient Descent, Stochastic Stability, Non-Convex optimization
}

\section{Introduction}
The stochastic gradient descent (SGD) and its variants have been predominantly applied in machine learning \cite{bottou2018optimization} due to the overall computational efficiency and robustness. In a typical training task, the objective function $f(\bs{x})$ is expressed as an empirical risk,
\begin{equation}\label{eq: fi2f}
    f(\bs{x}) = \mathbb{E}[f(\bs{x};\omega)].
\end{equation}Here, the randomness of $\omega$ stems from the sampling of training data set.
The standard SGD updates the iterations as follows:
\begin{equation}\label{eq: sgd}
        \textrm{SGD}:
        \bs{x}_{n+1}=\bs{x}_n-a_n \left(\nabla f(\bs{x}_n)+\bs{\xi}_n\right), 
\end{equation}where we have adopted the notation from \cite{mertikopoulos2020almost}, and for simplicity we set the batch size to be 1. Here, $a_n$ is known as the learning rate, and $\bs{\xi}_n:=\nabla f(\bs{x}_n,\omega_n)-\nabla f(\bs{x}_n)$ is the noise induced by the sampling of the gradient.

In the case when $f$ is convex, convergence properties for SGD have been well established, examples include the stepsize policy \cites{bottou2018optimization}, comparison to stochastic averaging methods \cite{nemirovski2009robust}, validation analysis \cite{ghadimi2012optimal}, etc. On the other hand, a remarkable observation, as demonstrated in many recent studies, is that the stochastic gradient algorithms still perform well in non-convex optimization, e.g. the training of neural networks \cites{neyshabur2015path,goyal2017accurate,cutkosky2019momentum}, and
such success has intrigued many theoretical development to explore convergence properties for non-convex optimization problems. It is important though, to point out that most of these results are obtained under some global assumptions, such as the global Lipschitz constant \cites{li2019convergence,haddadpour2019local,madden2020high} or global H\"{o}lder constant \cite{lei2019stochastic} for the gradient or the globally bounded variance of the noise \cite{karimi2016linear}.

Despite these important theoretical results, training tasks with actual non-convex loss functions have exhibited many issues that are still difficult to interpret based on these analyses. For instance, the performance of training neural networks can be very sensitive to the initialization \cites{glorot2010understanding,li2018visualizing,fort2019goldilocks}. Intuitively, upon convergence, the behavior of SGD is largely determined by local properties. However, many of the existing results on non-convex optimizations are established under global assumptions, which may not hold for many practical training tasks, e.g., the lack of a global Lipschitz constant \cites{sun2019optimization,hodgkinson2021generalization}, and globally bounded variance of the noise \cite{gurbuzbalaban2021heavy}.
One recent development toward addressing this issue is the work \cite{mertikopoulos2020almost} who proved convergence of SGD under local assumptions on the initialization, the gradient and the noise, but in the case of isolated local minima.

In addition to the aforementioned issues, another interesting issue in non-convex optimization is the presence of non-isolated minima, as observed in  \cites{sagun2017empirical,cooper2021global,draxler2018essentially,garipov2018loss}. Despite the clear intuition from Figure \ref{fig:my_label1},  formulating local convexity conditions around such a set to precisely describe the behavior of the loss function  in \eqref{eq: fi2f} is a non-trivial task. More importantly, it creates an important gap between the practical optimization and theoretical analysis.
This important scenario has received little attention until the recent analysis under some local convexity assumptions  \cites{patel2021global,fehrman2020convergence}. Meanwhile, questions still remain as to whether more general characterizations exist for a wider variety of loss functions with non-isolated local minima, and more importantly, whether the convergence SGD can still be established for these general optimization problems. From a practical viewpoint, such convergence analysis  will help interpret many observations from training algorithms, e.g., the roles of hyperparameters in the stochastic optimization.

\begin{figure}
    \centering
    \includegraphics[scale=0.25]{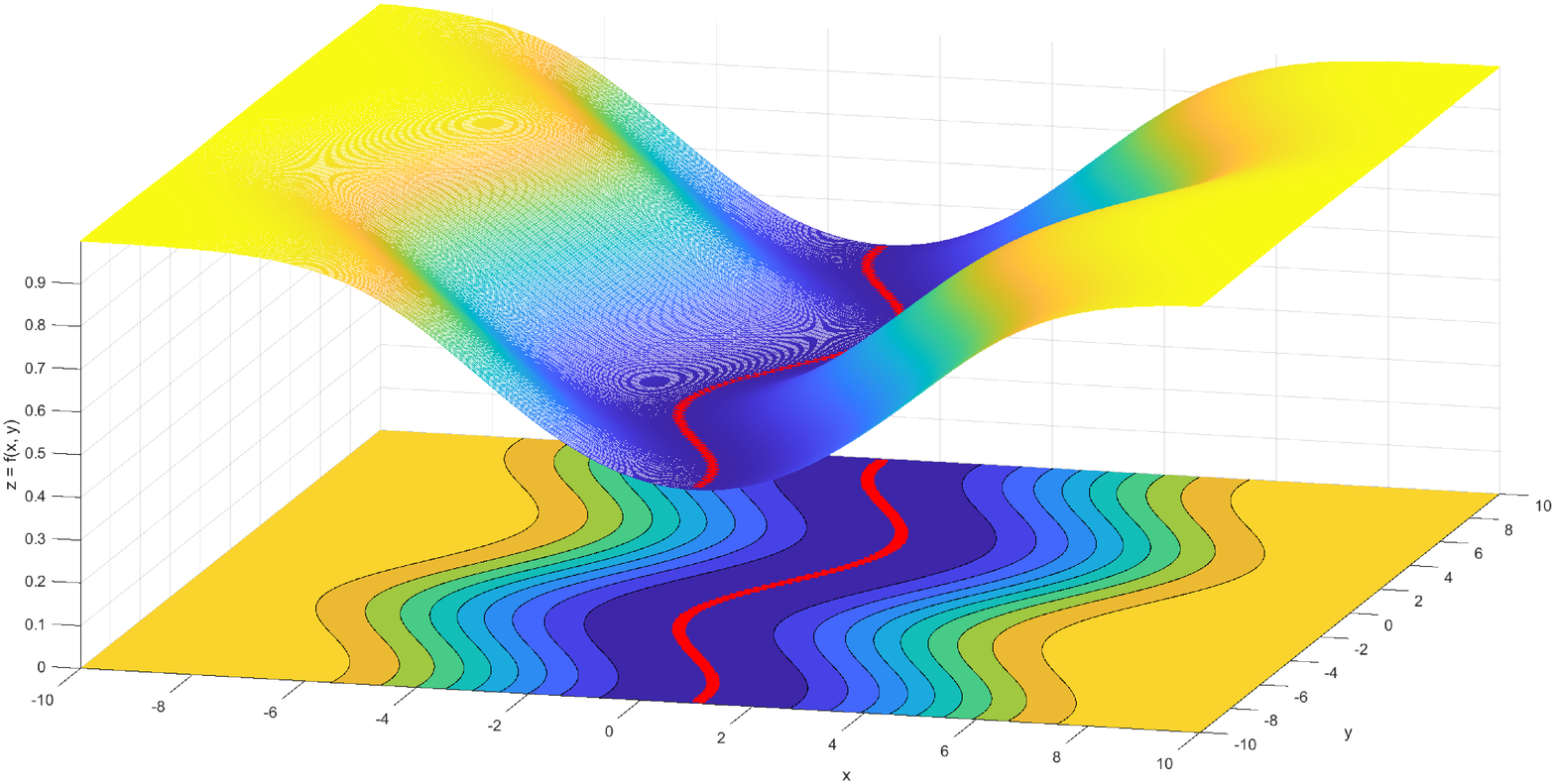}
    \caption{The landscape of a non-convex function with non-isolated global minima (red)}
    \label{fig:my_label1}
\end{figure}

The main objective of this paper is two-fold: First, we introduce locally-defined mathematical conditions for the loss function with non-isolated minima. We establish relationships among such conditions, including those proposed in prior works \cites{fehrman2020convergence,hinder2020near,karimi2016linear}. Second, we prove that non-isolated minima still has finite probability of attracting SGD iterates \eqref{eq: sgd}. Specifically, we  establish concentration properties of SGD iterations, which lead to convergence to $\mathcal{X}$ in probability and complexity bounds.  These results are obtained only with local conditions, which sharpens results from those analysis obtained under global conditions. In the case where $\mathcal{X}$ is compact, under more relaxed assumptions than those in \cite{fehrman2020convergence}, we prove a more general concentration inequality of SGD. Furthermore, even under the same assumption as in \cite{fehrman2020convergence}, our concentration inequalities are improved.

\begin{table}
{\footnotesize
\begin{tabular}{|p{4.4cm} | p{4.5cm} | p{5cm} |} 
 \hline
 Condition & Definition  & Concentration Inequality\\ [0.5ex]
 \hline
 LSC {(\cite{mertikopoulos2020almost})} & 
 $(\nabla f(\bs{x}),\bs{x}-\bs{x}^*)\geq \mu \|\bs{x}-\bs{x}^*\|^2$ &
 
 $\mathbb{P}\{\|\bs{x}-\bs{x}^*\|^2\geq\epsilon\}\leq\mathcal{O}(\frac{1}{\epsilon n^{\beta}})$ \\[8pt]
 \hline
 
 PL*  (\cite{liu2022loss}) 
 & $\|\nabla f(\bs{x})\|^2\geq \mu (f(\bs{x})-f^*)$ & $\mathbb{P}\{f(\bs{x})-f^*\geq\epsilon\}\leq\mathcal{O}(\frac{\rho^n}{\epsilon})$ \\[8pt]
 \hline
 HCPRC (\cite{fehrman2020convergence}) & $\textrm{rank}(\nabla^2f(\bs{x}))=d-\mathfrak{d}$ 
 & $\mathbb{P}\{f(\bs{x})-f^*\geq\epsilon\}\leq\mathcal{O}(\frac{1}{\epsilon^2n^{\beta}}+n^{1-\beta})$ \\[8pt]
 \hline
 LRSI  (this work) & $(\nabla f(\bs{x}),\bs{x}-\bs{x}_p)\geq \mu \|\bs{x}-\bs{x}_p\|^2$ & $\mathbb{P}\{f(\bs{x})-f^*\geq\epsilon\}\leq\mathcal{O}(\frac{1}{\epsilon n^{\beta}})$ \\[8pt]
 \hline
  WQC  (this work) & $(\nabla f(\bs{x}),\bs{x}-\bs{x}_p)\geq \zeta (f(\bs{x})-f^*)$ & $\mathbb{P}\{f(\bs{x})-f^*\geq\epsilon\}\leq\mathcal{O}(\frac{1}{\epsilon n^{1-\beta}})$ \\[8pt]
 \hline
 NNS (this work) & $(\nabla f(\bs{x}),\bs{x}-\bs{x}_p)\geq h(\bs{x})\geq 0$ & $\mathbb{P}\{h(\bs{x})\geq\epsilon\}\leq\mathcal{O}(\frac{1}{\epsilon n^{1-\beta}})$ \\[8pt]
 \hline
\end{tabular}
 }
 \caption{The concentration results by previous works and this work under \emph{local} conditions: the Local Strong Convexity (LSC), a locally-defined Polyak-{\L}ojasiewicz condition (PL*), the Hession of the constant positive rank (HCPRC) on a Compact submanifold of global minima, a locally-defined restrict secant inequality (LRSI), the weak quasar-convexity (WQC) as an extension of the quasar convexity (QC) in \cite{hinder2020near} and the non-negative support (NNS).  In these conditions, $\bs{x}^*$ refers to a local minimum, and $\bs{x}_p$ denotes a metric projection of $\bs{x}$.}
 \end{table} 
 
Specifically, we summarize our contributions as follows:

\begin{itemize}
  \item We propose several conditions that describe loss functions with non-isolated minima, including a locally-defined restrict secant inequality (LRSI), the weak quasar-convexity (WQC) the non-negative support (NNS) conditions. Their mathematical expressions, as well as those from the existing literature are summarized in Table 1.
  
  \item We propose a new approach to analyze SGD iterations. The approach combines the Lyapunov function approach \cite{kushner2003stochastic} based on the optional stopping theorem \cite{williams1991probability}, and yields nearly sharp theoretical results under only locally defined conditions. With the help of these  results, we explicitly formulate the probability of convergence when the iteration from SGD starts near a set of non-isolated  minima. This provides theoretical supports for several empirical observations from the training of neural networks.   
  
    \item We show the concentration properties of SGD that are also applicable to the regime of small batch-sizes for a large class of landscapes arisen in non-convex optimization. As an example, the result qualitatively elucidates the slowdown of SGD in a highly flat landscape near global minima. Additionally, we found that convergence properties of SGD near global minima  are still very similar to those from strongly convex functions when the local landscape of non-isolated global minima satisfies the LRSI condition.  
\end{itemize}

\section{Preliminaries}\label{prelim}

\subsection{Notation}

Throughout the paper,
$K \subset \mathbb{R}^d$ denotes a compact set unless it is stated otherwise. $d$ represents the dimension of the parameter space.  We use $\|\cdot\|$ for the $\ell_2$ norm in $\mathbb{R}^d$. $[N]$ is the set of integers $[N]:=\{1,2,\cdots, N\}.$ For the complexity estimates, we define big-$\mathcal{O}$ notation as follows: $a_n=\mathcal{O}(b_n)$ if $\underset{n}{\limsup}\left|\frac{a_n}{b_n}\right|<\infty$.

Next, we introduce the notions of the stable path and the stochastic stability for discrete stochastic processes.
\begin{defn}[Stable Path]\label{stabpath}
With an initial iterate $\bs{x}_1\in K$, a realization $\{\bs{x}_n\}_{n=2}^{\infty}$ from a stochastic algorithm is called  a stable path if
\begin{equation}
    \bs{x}_n\in K \textrm{ for all }n\geq 2.
\end{equation}
\end{defn}
Since the update rule for SGD in \eqref{eq: sgd} uses the previous information, the iteration forms a discrete stochastic process with a filtration $\{\mathcal{F}_n\}_{n=1}^{\infty}$. Formally, the stochastic stability is defined by the measure of the set of stable paths with respect to the $\sigma$-algebra $\mathcal{F}_{\infty}:=\sigma(\cup_n\mathcal{F}_n)$ as follows.
\begin{defn}[Stochastic Stability]\label{stocstab}
With an initial guess $\bs{x}_1\in K$, an iteration from a stochastic algorithm is said to be stable with probability at least $1-\eta$, if the following inequality is satisfied
\begin{equation}
    \mathbb{P}\left\{\bs{x}_n\in K\textrm{ for all }n\geq 2\;|\bs{x}_1\right\}\geq 1-\eta.
\end{equation}
\end{defn}
The notion of stochastic stability can be found in [\cite{kushner1967stochastic}, p 31]. 

\subsection{Assumptions}
Our regularity assumption on the loss function $f$ is that it has a locally Lipschitz gradient. For example, a large class of neural networks satisfy this condition. 
\begin{assumption}\label{as:01}
The gradient of $f$ in the iteration \eqref{eq: sgd} is locally Lipschitz continuous for any compact set $K$, i.e., there exists a constant $L_{K}>0$ such that,
\[\left\|\nabla f(\bs{x})-\nabla f(\bs{y})\right\|\leq L_{K}\|\bs{x}-\bs{y}\|, \quad \textrm{for all }\bs{x},\bs{y}\in K.\]
\end{assumption}

We make assumption on the noise in SGD \eqref{eq: sgd} as follows. 
\begin{assumption}\label{as:02}
The noise satisfies that
\begin{itemize}
    \item[(i)]$\mathbb{E}\left[\bs{\xi}(\bs{x},\omega)|\bs{x}\right]=0$ for any $\bs{x}\in\mathbb{R}^d$, (Unbiased Stochastic Gradient)
    \item[(ii)]There exists some $\sigma_{K}>0$ for any compact set $K$ such that \[\sup_{\bs{x}\in K}\mathbb{E}\left[\|\bs{\xi}(\bs{x},\omega)\|^2|\bs{x}\right]\leq \sigma_{K}.\]
\end{itemize}
\end{assumption}
The second part implies that the variance of the noise is locally bounded. This assumption is equivalent to the local conditions in \cite{fehrman2020convergence}). 

Next, we make the standard assumption on the learning rates (\cites{bottou2018optimization,robbins1951stochastic}),  as follows. 
\begin{assumption}\label{as:03}
The learning rates satisfy that
\begin{equation}
    a_n> 0,\quad \sum_{n=1}^{\infty}a_n=\infty,\quad \sum_{n=1}^{\infty}a_n^2<\infty.
\end{equation}
\end{assumption}

\section{Stochastic Gradient Descent with Local Convexity Conditions}\label{sec: sgd}

In this section, we start by introducing a condition that is mild, but still sufficient to describe non-convex loss function near local minima. We present the relationships between the new condition and other existing characterizations of such loss functions. Then, under such conditions, we prove the concentration inequalities of the SGD method \eqref{eq: sgd}.

\subsection{Local Convexity Conditions and their relations}
Before we present the analysis of the SGD iterations \eqref{eq: sgd}, we first examine a number of important local convexity conditions. An important emphasis will be placed on their relations, since results that are established under one condition can be directly extended to situations where a weaker condition holds.

We first introduce the notion of a metric projection, which will be used to pose a new local convexity condition.



\begin{defn}\label{defn:proj}
For a compact set $\mathcal{X}$, the set-valued mapping $\Pi_{\mathcal{X}}:\mathbb{R}^d\to 2^{\mathcal{X}}$ is defined by
\begin{equation*}
    \Pi_{\mathcal{X}}(\bs{x})=\{\bs{z}\in \mathcal{X}:\|\bs{x}-\bs{z}\|=\mathrm{dist}(\bs{x},\mathcal{X})\}.
\end{equation*}This is known as the metric projection \cite{shapiro1994existence} with respect to the usual Euclidean distance in our case. 
\end{defn}

\begin{figure}
    \centering
    \includegraphics[scale=0.32]{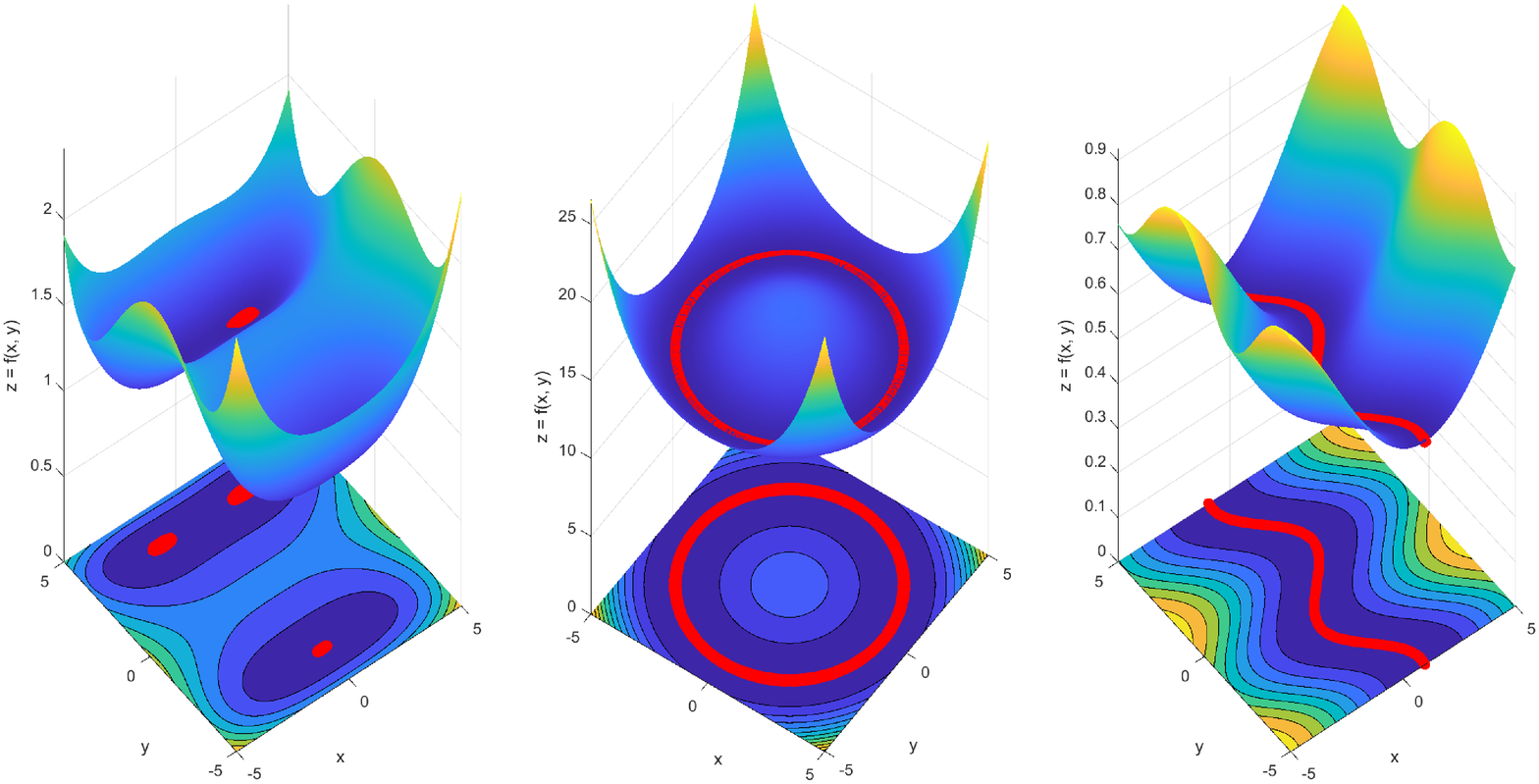}
    \caption{Examples of non-convex functions with isolated minima (left), or non-isolated minima (middle and right). The red regions indicate sets of local minima. The functions from all three figures satisfy the NNS condition \eqref{gc}.}
    \label{fig:my_label2}
\end{figure}

We will denote by $\mathcal{X}$, a compact subset of global minima with or without connectivity.  In the following assumption, we denote by $N_r(\mathcal{X})$ the closure of a $r$-neighborhood of $\mathcal{X}$.

\begin{assumption}\label{as:04}
There exists a non-negative function $h:N_r(\mathcal{X})\to \mathbb{R}$ such that
\begin{equation}\label{gc}
    \left(\nabla f(\bs{x}),\bs{x}-\bs{x}_p \right)\geq h(\bs{x}),
\end{equation}for any $\bs{x}\in N_r(\mathcal{X})$ and any projection $\bs{x}_p\in \Pi_{\mathcal{X}}(\bs{x})$. Due to the non-negative function $h$, we refer to this condition as  non-negative support $\mathrm{(NNS)}$ condition.
\end{assumption}

As a pictorial illustration, \cref{fig:my_label2} shows the examples that fulfill Assumption \ref{as:04}.

As a starting point to analyze optimization algorithms for non-convex functions, numerous conditions have been previously proposed to describe a non-convex loss function near local minima:
[\cite{zhang2013gradient}, the restricted secant inequality (RSI)], [\cite{karimi2016linear}, the Polyak-{\L}ojasiewicz (PL), the Quadratic Growth (QG)], [\cite{hinder2020near}, the quasar convexity (QC)] and [\cite{nesterov2006cubic}, the star convexity ($^*$C)]. Some of these conditions were initially proposed to hold globally, but they can be easily relaxed to {\it local} conditions, in that they are assumed to hold in the neighborhood $N_r(\mathcal{X})$. Namely, for any $\bs{x}\in N_r(\mathcal{X})$,
\begin{subequations}\label{conditions}
\begin{eqnarray}
    &\big(\nabla f(\bs{x}),\bs{x}-\bs{x}_p\big)\geq\mu\big\|\bs{x}-\bs{x}_p\big\|^2\; (\mathrm{LRSI}),\label{cond-a}\\
    &\big\|\nabla f(\bs{x})\big\|^2\geq \mu \big(f(\bs{x})-f^*)\; (\mathrm{PL}^*\big),\label{cond-b}\\
    &f(\bs{x})-f^*\geq \mu\big\|\bs{x}-\bs{x}_p\big\|^2\; (\mathrm{QG}^*),\label{cond-c}\\
    &\big(\nabla f(\bs{x}),\bs{x}-\bs{x}^*\big)\geq f(\bs{x})-f^*\; (^*\mathrm{C}),\label{cond-d}\\
    &\big(\nabla f(\bs{x}),\bs{x}-\bs{x}^*\big)\geq \zeta \big(f(\bs{x})-f^*\big)\; (\mathrm{QC}),\label{cond-e}\\
    &\big(\nabla f(\bs{x}),\bs{x}-\bs{x}_p\big)\geq \zeta \big(f(\bs{x})-f^*\big)\; (\mathrm{WQC}).\label{cond-f}
\end{eqnarray}
\end{subequations}Here $\mu>0$, $0<\zeta\leq 1$ and $\bs{x}_p$ is a projection of $\bs{x}$ onto $\mathcal{X}$.

Another interesting characterization is in terms of the local geometry 
 \cite{fehrman2020convergence}. Let $\mathcal{M}:=\{\bs{x}:f(\bs{x})=f^*\}$ be the set of all global minima and assume that there exists an open set $U\subset\mathbb{R}^d$ with some $\mathfrak{d}\in\{0,1,..,d-1\}$ such that $\mathcal{M}\cap U$ is a non-empty $\mathfrak{d}$-dimensional $C^2$-submanifold of $\mathbb{R^d}$ with $f \in C^3(U)$ and
\begin{equation}\label{HCPR}
    \textrm{rank}(\nabla^2f(\bs{x}))=d-\mathfrak{d}\textrm{ for each }\bs{x}\in \mathcal{M}\cap U.
\end{equation}In particular, they also considered the case of such a submanifold being compact and without boundary [\cite{fehrman2020convergence}, Section 6]. We will refer to this condition as the Hessian of the constant positive rank on a Compact submanifold (HCPRC) condition.

\medskip
We now discuss the relations between various convexity conditions. 

\begin{rem}
$\mathrm{LRSI}$ and $\mathrm{WQC}$ are special cases of the $\mathrm{NNS}$ condition, since we can choose the following support functions respectively,
\begin{subequations}
\begin{eqnarray}
    &h(\bs{x})=\mu\|\bs{x}-\bs{x}_p\|^2\; (\mathrm{for \; LRSI})\\
    &h(\bs{x})=\zeta (f(\bs{x})-f^*)\; (\mathrm{for \; WQC}).
\end{eqnarray}
\end{subequations}
Therefore, Assumption \ref{as:04} encompasses very general local landscapes for $N_r(\mathcal{X})$ due to the flexibility of choosing $h(\bs{x})$. One can make an interpretation that $h(\bs{x})$ estimates how close the loss value $f(\bs{x})$ is to the minimum $f^*$ by the mean value theorem,
\begin{equation*}
    (\nabla f(\bs{x}),\bs{x}-\bs{x}_p)\approx f(\bs{x})-f^*.
\end{equation*} 
\end{rem}

Now we turn to the relationship between NNS \eqref{gc} and PL$^*$ or QG$^*$ \eqref{conditions}. We first introduce a useful proposition of a projection map.
\begin{prop}\label{projpro}
For a compact set $\mathcal{X}\subset\mathbb{R}^d$ and any $\bs{x}\in\mathbb{R}^d$, let $\bs{x}_p\in\Pi_{\mathcal{X}}(\bs{x})$. If $\bs{x}\neq\bs{x}_p$, then $\Pi_{\mathcal{X}}(\bs{y})=\{\bs{x}_p\}$ for any $\bs{y}\in \{\bs{z}:\bs{z}=(1-t)\bs{x}_p+t\bs{x},\; t\in(0,1)\}$.
That is, the projection of any $\bs{y}$ between $\bs{x}$ and $\bs{x}_p$ onto $\mathcal{X}$ is unique and equal to $\bs{x}_p$.
\end{prop}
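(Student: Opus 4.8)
The plan is to establish two things: that the given $\bs{x}_p$ is a nearest point of $\mathcal{X}$ to every such $\bs{y}$, and that it is the only one. Both parts exploit the single geometric fact that $\bs{y}$ sits strictly between $\bs{x}_p$ and $\bs{x}$ on a straight segment, so that the two vectors $\bs{x}-\bs{y}$ and $\bs{y}-\bs{x}_p$ are positive scalar multiples of the common direction $\bs{x}-\bs{x}_p$. This forces the triangle inequality to hold with equality along the segment, namely $\|\bs{x}-\bs{y}\|+\|\bs{y}-\bs{x}_p\|=\|\bs{x}-\bs{x}_p\|=\mathrm{dist}(\bs{x},\mathcal{X})$, and the whole argument is then careful bookkeeping of this identity together with the strict convexity of the $\ell_2$ norm.

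For the first part I would write $\bs{y}=(1-t)\bs{x}_p+t\bs{x}$ with $t\in(0,1)$ and argue by contradiction: if some $\bs{z}\in\mathcal{X}$ satisfied $\|\bs{y}-\bs{z}\|<\|\bs{y}-\bs{x}_p\|$, then the triangle inequality would give $\|\bs{x}-\bs{z}\|\leq\|\bs{x}-\bs{y}\|+\|\bs{y}-\bs{z}\|<\|\bs{x}-\bs{y}\|+\|\bs{y}-\bs{x}_p\|=\|\bs{x}-\bs{x}_p\|=\mathrm{dist}(\bs{x},\mathcal{X})$, contradicting that $\bs{x}_p$ realizes the distance from $\bs{x}$ to $\mathcal{X}$. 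Hence $\bs{x}_p\in\Pi_{\mathcal{X}}(\bs{y})$ and $\mathrm{dist}(\bs{y},\mathcal{X})=\|\bs{y}-\bs{x}_p\|$; compactness of $\mathcal{X}$ guarantees this set is nonempty to begin with.

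For uniqueness I would take an arbitrary $\bs{z}\in\Pi_{\mathcal{X}}(\bs{y})$, so $\|\bs{y}-\bs{z}\|=\|\bs{y}-\bs{x}_p\|$, and run the chain $\mathrm{dist}(\bs{x},\mathcal{X})\leq\|\bs{x}-\bs{z}\|\leq\|\bs{x}-\bs{y}\|+\|\bs{y}-\bs{z}\|=\|\bs{x}-\bs{y}\|+\|\bs{y}-\bs{x}_p\|=\mathrm{dist}(\bs{x},\mathcal{X})$. Since $\bs{z}\in\mathcal{X}$ the leftmost inequality is forced to be an equality, so the middle triangle inequality is tight. For the Euclidean norm, equality $\|\bs{a}+\bs{b}\|=\|\bs{a}\|+\|\bs{b}\|$ with $\bs{a}=\bs{x}-\bs{y}\neq\bs{0}$ forces $\bs{b}=\bs{y}-\bs{z}$ to be a non-negative multiple of $\bs{a}$; matching the magnitude $\|\bs{y}-\bs{z}\|=\|\bs{y}-\bs{x}_p\|$ then pins down the scalar uniquely, and a direct substitution using $\bs{x}-\bs{y}=(1-t)(\bs{x}-\bs{x}_p)$ yields $\bs{z}=\bs{x}_p$.

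The main obstacle is this last step of the uniqueness argument: one must invoke the equality case of the triangle inequality, which is precisely where strict convexity of the $\ell_2$ norm enters — in a general (non-strictly convex) norm the conclusion could genuinely fail. Everything else is elementary collinearity bookkeeping, and the compactness of $\mathcal{X}$ is used only to ensure that projections exist so that the statement is non-vacuous.
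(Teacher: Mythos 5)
Your proof is correct, and while it rests on the same two geometric facts as the paper's proof --- the collinearity identity $\|\bs{x}-\bs{y}\|+\|\bs{y}-\bs{x}_p\|=\|\bs{x}-\bs{x}_p\|$ and the strict convexity of the Euclidean norm --- the uniqueness half is organized differently. The paper argues by contradiction with a case analysis on the location of a putative second projection $\bs{y}_p$: it rules out $\bs{y}_p$ on the open segment $(\bs{x}_p,\bs{x})$, then the ray beyond $\bs{x}$ (the parametrization $\bs{x}_p+t(\bs{x}-\bs{x}_p)$ with $t>1$), and finally handles off-line points via the strict triangle inequality. You instead take an arbitrary $\bs{z}\in\Pi_{\mathcal{X}}(\bs{y})$, collapse the chain $\mathrm{dist}(\bs{x},\mathcal{X})\leq\|\bs{x}-\bs{z}\|\leq\|\bs{x}-\bs{y}\|+\|\bs{y}-\bs{z}\|=\mathrm{dist}(\bs{x},\mathcal{X})$ to equality, and invoke the equality case of the triangle inequality to force $\bs{y}-\bs{z}$ to be a non-negative multiple of $\bs{x}-\bs{y}$, after which the distance condition pins down $\bs{z}=\bs{x}_p$ by direct computation. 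Your single uniform argument buys two things: it avoids the case enumeration entirely, and in doing so it also covers the configuration the paper's enumeration silently skips, namely $\bs{y}_p$ on the ray beyond $\bs{x}_p$ away from $\bs{x}$ (i.e.\ $t\leq 0$ in the paper's parametrization --- easily dispatched, but not written there). The cost is that you must explicitly invoke the Cauchy--Schwarz equality case, whereas the paper only uses its consequence (the strict triangle inequality for non-collinear points); these are of course the same strict-convexity fact, and your closing remark that the conclusion can genuinely fail for a non-strictly-convex norm such as $\ell_\infty$ is accurate and worth keeping. Your existence half is essentially the paper's computation recast as a contradiction, with compactness correctly used only for non-emptiness of the projection.
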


In the following example, $f$ represents functions whose local landscape in the vicinity of $\mathcal{X}$ is `highly flat', in which case both  \eqref{cond-e} (WQC) and  \eqref{cond-f} (NNS) conditions hold, but the conditions  \eqref{cond-b} (PL$^*$) and  \eqref{cond-c} (QG$^*$) fail.

\begin{example}[A flat basin of attraction is neither PL$^*$ or QG$^*$ but is WQC]\label{ex1}
Let $f$ be a non-convex, continuously differentiable function and satisfy that $\underset{\bs{x}\in\mathbb{R}^d}{\inf}f(\bs{x})=0$. Let  $\mathcal{X}$ be a compact set. Assume that for a closed $r$-neighborhood, $N_r(\mathcal{X})$, and for some $C>0$ and some $q>2$, 
\begin{equation*}
    f(\bs{x})=C\|\bs{x}-\bs{x}_p\|^q\quad\textrm{ for any }\bs{x}\in N_r(\mathcal{X}),
\end{equation*}
where $\bs{x}_p$ is a projection from $\bs{x}$ to $\mathcal{X}$. Due to the projection and $f\geq 0$, $\mathcal{X}$ consists of global minima of $f(\bs{x})$.  By Proposition \ref{projpro}, it follows that for any $\bs{x}\in N_r(\mathcal{X})$ and any $t\in [0,r]$, 
\begin{equation*}
    f(\bs{x}_p+t\bs{v})=Ct^q, 
\end{equation*}where $\bs{v}$ is the unit direction from $\bs{x}_p$ to $\bs{x}$. Then, by Lemma \ref{hinderlem}, the function $f(\bs{x}_p+t\bs{v})$ in $t$ is $\zeta$-quasar-convex with some $\zeta>0$, since it is continuously differentiable in $[0,r]$ and $f'(t)=0$ only if $t=0$. Finally, note that the form of such a function remains the same as is $Ct^q$ regardless of $\bs{x}_p\in\mathcal{X}$. This implies the NNS condition with $h(\bs{x})=\zeta f(\bs{x})$. As a result, $f(\bs{x})$ is WQC \eqref{conditions} as well as NNS \eqref{gc}. 

On the other hand, a calculus trick yields that
\begin{equation*}
    \|\nabla f(\bs{x})\|^2= |(\nabla f(\bs{x}),\bs{v})|^2=\left(Cqt_0^{q-1}\right)^2,
\end{equation*}where $t_0$ is defined by $\bs{x}=\bs{x}_p+t_0\bs{v}$. The first equality holds since $f$ is defined by the distance $\|\bs{x}-\bs{x}_p\|$ and decreases the fastest towards $\bs{x}_p$ at the point $\bs{x}$, which implies that $v$ and $\nabla f(\bs{x})$ are parallel. Thus, we note that $\|\nabla f(\bs{x})\|^2$ is of order $t_0^{2q-2}$, while $f(\bs{x})$ is of order $t_0^q$. Since $q>2$, we have
\begin{equation}
    \lim_{t\to 0}\frac{\|\nabla f(\bs{x}_p+t\bs{v})\|^2}{f(\bs{x}_p+t\bs{v})}=0.
\end{equation}Thus, PL$^*$ \eqref{conditions} does not hold even near $\mathcal{X}$. Similarly, QG$^*$ \eqref{conditions} does not hold either, since
\begin{equation*}
    \lim_{t\to 0}\frac{f(\bs{x}_p+t\bs{v})}{t^2}=0. 
\end{equation*}
\end{example}

Now we summarize the relationships among the convexity conditions in the following diagram, 
\begin{figure}[htp]
\begin{equation}\label{relationships}
\begin{split}
 (\mathrm{HCPRC}) \overset{(1)}{\Longrightarrow} \;&(\mathrm{LRSI})\overset{(2)}{\Longrightarrow} (\mathrm{NNS})\\
&\quad\Downarrow_{(3)}  \hspace{1.3cm}\Uparrow_{(4)}\\
(\mathrm{QG}^*)\textrm{ or }&\;(\mathrm{PL}^*) \overset{(7)}{\centernot\Longrightarrow} \;(\mathrm{WQC})\\
&\hspace{2.4cm} \Uparrow_{(5)}\\
&\hspace{2.4cm} (\mathrm{QC})\\
&\hspace{2.4cm} \Uparrow_{(6)}\\
&\hspace{2.4cm} (^*\mathrm{C})
\end{split}
\end{equation}
\caption{The inclusion relations among the convexity conditions. 
The proof of the implication of one condition to another: (1) [\cite{fehrman2020convergence},Lemma 14]; (2) Assumption \ref{as:04} with $h(\bs{x})=\mu\|\bs{x}-\bs{x}_p\|^2$; (3) [\cite{karimi2016linear},Appendix A] (Global condtions are given but the same technique also applies; (4)  Assumption \ref{as:04} with $h(\bs{x})=\zeta(f(\bs{x})-f^*)$; (5) WQC \eqref{cond-f} with $\mathcal{X}=\{\bs{x}^*\}$ $\rightarrow$ QC \eqref{cond-e}; (6) [\cite{hinder2020near},Observation 3];  (7) Example \ref{ex1}.  Note that (3) is the implication from LRSI to PL$^*$. The relationship between QG$^*$ and LRSI and that between between LRSI and WQC are open.} 

\end{figure}

\subsection{Main Results on the Convergence of SGD }

The following theorem shows how likely the iterations from SGD will stay close to $\mathcal{X}$ and eventually converge to $\mathcal{X}$ if its local landscape satisfies Assumption \ref{as:04}.

\begin{theorem}[Stochastic Stability and Probabilistic Convergence]\label{SGDthm}Suppose that there exists a closed $r$-neighborhood of a compact subset of global minima $\mathcal{X}$ such that $N_r(\mathcal{X})$ satisfies Assumption \ref{as:04}. Under Assumptions \ref{as:01} to \ref{as:03}, the following statements hold for the SGD iterations \eqref{eq: sgd},
\begin{itemize}
    \item[(i)]For any initial $\bs{x}_1\in N_r(\mathcal{X})$, the $N$ iterates remain in $N_r(\mathcal{X})$ with a positive probability,
\begin{equation}\label{stability}
    \mathbb{P}\left\{\bs{x}_n\in N_r(\mathcal{X})\textrm{ for each }n\in [N]|\bs{x}_1\right\}\geq 1-C_N,
\end{equation}
where 
\begin{equation}\label{eq: CN-bn'}
\begin{split}
    &C_N:=\frac{b_N}{r^2}\left(\mathrm{dist}(\bs{x}_1,\mathcal{X})^2+\frac{\sigma_r}{I}\sum_{n=1}^{N-1}\frac{a_n^2}{b_{n+1}}\right),\\
    &b_n:=\prod_{j=1}^{n-1}(1+L_r^2a_n^2),\quad b_1:=1,
\end{split}
\end{equation}$L_r$ is the Lipschitz constant for $N_r(\mathcal{X})$ and $I$ is the batch-size.
    \item[(ii)]If $h(\bs{x})>0$ for each $\bs{x}\in N_r(\mathcal{X})\setminus\mathcal{X}$ in Assumption \ref{as:04}, then
    \begin{equation}\label{proconv}
    \mathbb{P}\left\{\lim_{n\to\infty}\bs{x}_n=\bs{x},\; f(\bs{x})=f^*\;|\{\bs{x}_n\}_{n=1}^{\infty}\subset N_r(\mathcal{X})\right\}=1.
    \end{equation}
\end{itemize}
Consequently, these statements imply that
\begin{equation*}
    \mathbb{P}\left\{\lim_{n\to\infty}\bs{x}_n=\bs{x},f(\bs{x})=f^*|\bs{x}_1\right\}\geq 1-C_{\infty}.
\end{equation*}
\end{theorem}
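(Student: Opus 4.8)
The plan is to use the squared distance to the minimizing set as a Lyapunov function and to control it through a stopped supermartingale. Write $d_n:=\mathrm{dist}(\bs{x}_n,\mathcal{X})$ and pick, for each $n$, a projection $\bs{x}_{n,p}\in\Pi_{\mathcal{X}}(\bs{x}_n)$. First I would expand $d_{n+1}^2\le\|\bs{x}_{n+1}-\bs{x}_{n,p}\|^2$ (the genuine projection of $\bs{x}_{n+1}$ can only be closer than the candidate $\bs{x}_{n,p}\in\mathcal{X}$), substitute the update \eqref{eq: sgd}, and take $\mathbb{E}[\cdot\mid\mathcal{F}_n]$. The unbiasedness in Assumption \ref{as:02}(i) kills the cross term with $\bs{\xi}_n$, the variance bound in Assumption \ref{as:02}(ii) controls $\mathbb{E}[\|\bs{\xi}_n\|^2\mid\mathcal{F}_n]$, the NNS condition \eqref{gc} lower-bounds $(\nabla f(\bs{x}_n),\bs{x}_n-\bs{x}_{n,p})$ by $h(\bs{x}_n)\ge 0$, and $\nabla f(\bs{x}_{n,p})=0$ combined with Assumption \ref{as:01} gives $\|\nabla f(\bs{x}_n)\|\le L_r d_n$. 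This yields, while $\bs{x}_n\in N_r(\mathcal{X})$,
\begin{equation*}
\mathbb{E}[d_{n+1}^2\mid\mathcal{F}_n]\le (1+L_r^2 a_n^2)\,d_n^2-2a_n h(\bs{x}_n)+\frac{a_n^2\sigma_r}{I}.
\end{equation*}

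For part (i), I would introduce the exit time $\tau:=\inf\{n:\bs{x}_n\notin N_r(\mathcal{X})\}$ and normalize by $b_n$. Since $b_{n+1}=(1+L_r^2a_n^2)b_n$ and $h\ge 0$, dividing the recursion by $b_{n+1}$ shows that $Z_n:=d_n^2/b_n$ satisfies $\mathbb{E}[Z_{(n+1)\wedge\tau}\mid\mathcal{F}_n]\le Z_{n\wedge\tau}+\mathbf{1}_{\{n<\tau\}}a_n^2\sigma_r/(Ib_{n+1})$. Taking total expectations and telescoping bounds $\mathbb{E}[Z_{N\wedge\tau}]$ by $\mathrm{dist}(\bs{x}_1,\mathcal{X})^2+\tfrac{\sigma_r}{I}\sum_{n=1}^{N-1}a_n^2/b_{n+1}$. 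On $\{\tau\le N\}$ one has $d_\tau>r$ and $b_\tau\le b_N$, so $Z_{N\wedge\tau}\ge r^2/b_N$; since $Z\ge 0$, the elementary estimate $\mathbb{E}[Z_{N\wedge\tau}]\ge\frac{r^2}{b_N}\mathbb{P}\{\tau\le N\}$ (this is where the optional stopping theorem of \cite{williams1991probability} enters) gives $\mathbb{P}\{\tau\le N\}\le C_N$, which is exactly \eqref{stability}.

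For part (ii), I would work on the event $\{\tau=\infty\}$, where the recursion holds for all $n$, and apply the Robbins--Siegmund almost-supermartingale theorem with $V_n=d_n^2$, $\beta_n=L_r^2a_n^2$, $\xi_n=a_n^2\sigma_r/I$, $\zeta_n=2a_nh(\bs{x}_n)$; Assumption \ref{as:03} guarantees $\sum\beta_n<\infty$ and $\sum\xi_n<\infty$. The conclusion is that $d_n^2$ converges almost surely and $\sum_n a_n h(\bs{x}_n)<\infty$. Combined with $\sum_n a_n=\infty$ this forces $\liminf_n h(\bs{x}_n)=0$; the strict positivity of $h$ off $\mathcal{X}$ on the compact annulus $\{c\le d_n\le r\}$ then excludes a positive limit, so $d_n\to 0$, i.e. every limit point lies in the closed set $\mathcal{X}$ of global minima. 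To upgrade this to convergence to a single point $\bs{x}$, I would use that the martingale $\sum_n a_n\bs{\xi}_n$ converges almost surely (its increments are $L^2$-summable by Assumptions \ref{as:02}(ii) and \ref{as:03}); setting $R_n:=\sum_{j\ge n}a_j\bs{\xi}_j\to 0$, the shifted sequence $\bs{y}_n:=\bs{x}_n+R_n$ obeys the noise-free recursion $\bs{y}_{n+1}-\bs{y}_n=-a_n\nabla f(\bs{x}_n)$, and controlling its increments via $\|\nabla f(\bs{x}_n)\|\le L_r d_n$ yields a genuine limit $\bs{x}$; since $d_n\to 0$, $\bs{x}\in\mathcal{X}$ and hence $f(\bs{x})=f^*$.

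Finally, the stated inequality follows by combining the two parts. Because $\sum a_n^2<\infty$, the product $b_n$ converges to a finite $b_\infty$ and $\sum_n a_n^2/b_{n+1}<\infty$, so $C_N\uparrow C_\infty<\infty$ and letting $N\to\infty$ in \eqref{stability} gives $\mathbb{P}\{\tau=\infty\mid\bs{x}_1\}\ge 1-C_\infty$. Conditioning the convergence event $B=\{\lim_n\bs{x}_n=\bs{x},\,f(\bs{x})=f^*\}$ on $A=\{\tau=\infty\}$ and using $\mathbb{P}(B\mid A)=1$ from part (ii) gives $\mathbb{P}(B\mid\bs{x}_1)\ge\mathbb{P}(A\mid\bs{x}_1)\ge 1-C_\infty$. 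I expect the main obstacle to be part (ii): turning the summability $\sum_n a_n h(\bs{x}_n)<\infty$ into honest convergence to a single minimizer, since $h$ is only assumed non-negative (and strictly positive off $\mathcal{X}$) with no a priori modulus, so the annulus argument needs lower semicontinuity of $h$ (or of $(\nabla f(\cdot),\cdot-\cdot_p)$), and the single-point conclusion needs the noise-martingale decomposition above rather than mere convergence of $d_n^2$.
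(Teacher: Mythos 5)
Your part~(i) is correct and is essentially the paper's own argument: the same one-step recursion for $\mathrm{dist}(\bs{x}_n,\mathcal{X})^2$, the same normalization by $b_n$, and a stopped nonnegative supermartingale; whether one then applies optional stopping at $\tau\wedge N$ (as you do) or the supermartingale maximal inequality of Lemma~\ref{kushnerlem} (as the paper does) is immaterial, and your Robbins--Siegmund route to the summability $\sum_n a_n h(\bs{x}_n)<\infty$ on the stability event is a legitimate substitute for the paper's telescoping of \eqref{stabineq} combined with the Lemma~\ref{kolem} contradiction argument. The final combination of (i) and (ii) via conditioning is also exactly the paper's step.

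Part~(ii), however, has a genuine gap precisely where you flagged doubt, and your proposed repair does not close it. The shifted sequence should be $\bs{y}_n:=\bs{x}_n-R_n$ (with $+R_n$ the noise doubles instead of cancelling), but the sign is cosmetic; the real problem is that the noise-free recursion only bounds the increments by $\|\bs{y}_{n+1}-\bs{y}_n\|=a_n\|\nabla f(\bs{x}_n)\|\le a_nL_rd_n$, and $\sum_n a_nd_n$ can diverge even with $d_n\to 0$ and $\sum_n a_n^2<\infty$ (take $a_n=n^{-1}$, $d_n=1/\log n$). Since $\mathcal{X}$ is a continuum, nothing in your argument prevents the iterates from drifting along $\mathcal{X}$ indefinitely, so $\{\bs{y}_n\}$ need not be Cauchy and no limit point $\bs{x}$ is produced; at best you obtain $\mathrm{dist}(\bs{x}_n,\mathcal{X})\to 0$, which is strictly weaker than \eqref{proconv}. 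The paper does not attempt this step from scratch: it invokes Lemma~\ref{capturethm} (Patel's capture theorem), which asserts that conditioned on the stability event the SGD iterates converge almost surely to a single critical point $\bs{x}$ with $\nabla f(\bs{x})=0$; this is the missing ingredient your proof needs. The capture theorem also dissolves your second worry: once a single limit $\bs{x}$ with vanishing gradient exists, NNS is applied only \emph{at that point}, giving $0=(\nabla f(\bs{x}),\bs{x}-\bs{x}_p)\ge h(\bs{x})>0$ whenever $\bs{x}\in N_r(\mathcal{X})\setminus\mathcal{X}$, a contradiction requiring no regularity of $h$ whatsoever. Your annulus argument, by contrast, needs $\inf h>0$ on a compact annulus, which Assumption~\ref{as:04} does not grant; it is repairable by replacing $h$ with the lower semicontinuous marginal $g(\bs{x}):=\min_{\bs{x}_p\in\Pi_{\mathcal{X}}(\bs{x})}\left(\nabla f(\bs{x}),\bs{x}-\bs{x}_p\right)\ge h(\bs{x})$ (lsc since $\Pi_{\mathcal{X}}$ is upper semicontinuous with compact values, and the recursion holds with $g$ in place of $h$), but even after that repair you still lack single-point convergence, so Lemma~\ref{capturethm} or an equivalent result remains indispensable.
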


\begin{rem}\label{remark1}  
The inequality explicitly shows the role of the hyperparameters (e.g. $\bs{x}_1$, $a_n$, $I$, $\sigma_r$, $L_r$, $r$) \eqref{stability} in the stochastic stability of SGD in non-convex optimization. A qualitative interpretation is that if one starts with a good initialization and the initial guess $\bs{x}_1$ lies in a large nearly convex region (i.e., $r\gg 1$) near a set of global optima $\mathcal{X}$, then the optimization algorithm { has little chance of leaving the region.} Such a nearly convex landscape immediately belongs to the NNS condition \eqref{gc} by the diagram \eqref{relationships} which asserts that NNS includes LRSI. As such, our result \eqref{stability} theoretically strengthens the statement regarding initializations and well-behaved loss regions by \cite{li2018visualizing}. The empirical results there (ResNets and shallow VGG-like nets) also suggest that our theory can be applied to a variety of problems of practical interest.

Furthermore, \cite{li2018over} proved the feature of strictly decreasing loss towards a compact set of global minima in overparameterized fully connected deep neural networks (DNNs) with distinct input samples. This feature can be formulated as the condition $h(\bs{x})>0$ in theorem \ref{SGDthm}. Therefore, our result \eqref{proconv} together their result theoretically explains why SGD can achieve small training error in the regime of overparameterization, especially with the concentration inequality \eqref{concenth} for $h(\bs{x})$ in Theorem \ref{complexsgd}.    
\end{rem}

\begin{rem}
In terms of stochastic stability, we observe that $C_{\infty}$ in the definition \eqref{eq: CN-bn'} can be finite even for some small batch size, e.g. $I=1$, as long as $a_n$ is sufficiently small, since the learning rate $a_n$ satisfies $\sum_na_n^2<\infty$ in Assumption \ref{as:03}. Thus, our result \eqref{stability} theoretically ensures the regime of small constant batch size for SGD \cite{goyal2017accurate} with general learning rate $a_n$, which is in contrast to the similar result by \cite{fehrman2020convergence} that requires a policy of increasing batch-size $I$ as a polynomial order of the number of iterations $N$ to keep the stability of the iteration.
\end{rem}

Theorem \ref{SGDthm} also allows us to examine the stochastic stability of SGD for both decreasing and constant learning rates, as follows.
\begin{cor}[Upper Bounds of Learning Rates for Non-Convex Loss]\label{stabcond}
Under the assumptions in Theorem \ref{SGDthm}, the following statements hold for iterations of SGD
\begin{itemize}
    \item[(i)]If the learning rate is decreasing, $a_n=\frac{a}{n^{\beta}}$ with $\beta\in(\frac{1}{2},1]$ and $n\geq 1$, the stochastic stability that $1-C_{\infty}>0$ in Theorems \ref{SGDthm} is satisfied for any damping parameter $a>0$ such that
\begin{equation}\label{stineqdec}
    e^{\frac{2\beta L_r}{2\beta-1}a^2}\left(\mathrm{dist}(\bs{x}_1,\mathcal{X})^2+\frac{2\beta \sigma_ra^2}{(2\beta-1)I}\right)<r^2.
\end{equation}

    \item[(ii)]If the learning rate is a constant, $a_n=a$ for $n\in[N]$, the stochastic stability that $1-C_N>0$ is satisfied if the learning rate $a$ and the number of iteration $N$ fulfill that
    \begin{equation}\label{stineqcon}
    (1+L_r^2a^2)^{N-1}\left(\mathrm{dist}(\bs{x}_1,\mathcal{X})^2+\frac{\sigma_r}{IL_r^2}\left(1-\left(\frac{1}{1+L_r^2a^2}\right)^{N-1}\right)\right)<r^2.
\end{equation}
    
\end{itemize}

\end{cor}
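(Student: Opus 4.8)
The plan is to insert each learning-rate schedule into the bound $C_N$ from Theorem~\ref{SGDthm} and reduce the stability requirement to the two stated inequalities. Since Theorem~\ref{SGDthm}(i) guarantees stability with probability at least $1-C_N$ (and $1-C_\infty$ in the limit), it suffices in each case to exhibit a sufficient condition forcing the quantity in \eqref{eq: CN-bn'} strictly below one.

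For part (ii), the constant rate $a_n\equiv a$, I would proceed by an exact computation rather than by estimates. The product collapses to a power, $b_n=(1+L_r^2a^2)^{n-1}$, so that $b_{n+1}=(1+L_r^2a^2)^{n}$ and the weighted sum in \eqref{eq: CN-bn'} is a finite geometric series. Using the standard geometric formula together with the identity $1-\frac{1}{1+L_r^2a^2}=\frac{L_r^2a^2}{1+L_r^2a^2}$, this evaluates to
\begin{equation*}
    \sum_{n=1}^{N-1}\frac{a^2}{(1+L_r^2a^2)^{n}}
    =\frac{1}{L_r^2}\left(1-\Bigl(\tfrac{1}{1+L_r^2a^2}\Bigr)^{N-1}\right).
\end{equation*}
Substituting into $C_N$ and requiring $C_N<1$ reproduces \eqref{stineqcon} verbatim, so this part is purely algebraic. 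I would emphasize that only the finite-horizon statement \eqref{stability} is invoked here, because $\sum a_n^2=\infty$ for the constant schedule and Assumption~\ref{as:03} is not in force.

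For part (i), the decreasing rate $a_n=a/n^{\beta}$ with $\beta\in(\tfrac12,1]$, the argument is instead by upper bounds. First, $1+x\le e^{x}$ gives $b_\infty=\prod_{j\ge1}(1+L_r^2a_j^2)\le\exp\bigl(L_r^2\sum_{j\ge1}a_j^2\bigr)$, the infinite product converging precisely because $2\beta>1$. An integral comparison then bounds the $p$-series,
\begin{equation*}
    \sum_{n=1}^{\infty}\frac{1}{n^{2\beta}}\le 1+\int_1^{\infty}x^{-2\beta}\,dx=\frac{2\beta}{2\beta-1},
\end{equation*}
which controls the exponent, while $b_{n+1}\ge1$ lets me bound the weighted sum by $\sum_n a_n^2\le\frac{2\beta}{2\beta-1}a^2$ as well. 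Bounding $b_\infty$ and the bracketed quantity in $C_\infty$ separately and multiplying these positive upper bounds produces a sufficient condition for $C_\infty<1$ that is exactly \eqref{stineqdec}.

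The individual computations (geometric sum, $p$-series integral comparison) are routine; the one place that genuinely uses the hypothesis is $\beta>\tfrac12$, which simultaneously makes $\sum a_n^2$ finite---hence $b_\infty<\infty$---and the integral convergent, and is therefore the real content of the decreasing-rate case. The only bookkeeping subtlety I anticipate is keeping the power of $L_r$ straight: the definition \eqref{eq: CN-bn'} carries $L_r^2$, so I would retain $L_r^2$ in the exponent throughout, and I would check that the sufficiency direction runs correctly, namely that the stated condition $\Rightarrow$ the upper bound $<1\Rightarrow C_\infty<1$, whence stability.
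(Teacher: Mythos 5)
Your proposal is correct and follows essentially the same route as the paper's proof: for the constant rate, the exact geometric-series evaluation $\sum_{n=1}^{N-1}a^2/b_{n+1}=\frac{1}{L_r^2}\bigl(1-(1+L_r^2a^2)^{-(N-1)}\bigr)$ yielding \eqref{stineqcon} verbatim, and for the decreasing rate, the bound $1+x\le e^x$ combined with the integral comparison $\sum_n n^{-2\beta}\le\frac{2\beta}{2\beta-1}$ applied to $b_\infty$ and to the weighted sum (via $b_{n+1}\ge 1$). Your bookkeeping on the power of $L_r$ is in fact more careful than the paper itself, whose proof passes from $b_\infty=\prod_n(1+L_r^2a_n^2)$ to $e^{L_r\sum_n a_n^2}$ --- a step valid only when $L_r\le 1$ --- so that your derivation yields the exponent $e^{\frac{2\beta L_r^2}{2\beta-1}a^2}$ rather than the stated $e^{\frac{2\beta L_r}{2\beta-1}a^2}$ (an apparent typo in \eqref{stineqdec}), and your observation that the constant-rate case invokes only the finite-horizon bound \eqref{stability}, since Assumption \ref{as:03} fails for $a_n\equiv a$, is a correct refinement left implicit in the paper.
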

\begin{rem}In previous works, upper bounds for learning rates in the context of convex optimization have been studied for the optimization error in  \cites{nesterov2003introductory,bottou2018optimization}. As discussed in both works, upper bounds of learning rates are related to the optimal error $f(\bs{x}_n)-f^*$ and the condition number $\frac{\mu}{L}$, where $\mu$ is the parameter in terms of the strong convexity and $L$ is the global Lipschitz constant for gradient.  Corollary \ref{stabcond} suggests another types of upper bound pertaining to the stochastic stability in non-convex optimization. 

In addition, Corollary \ref{stabcond} implies that for large number of iterations ($N\gg 1$), using decreasing learning rates requires less effort than constant learning rates in terms of hyperparameters ( $\bs{x}_1$, $L_r$, $\sigma_r$, $a$, etc.). Specifically, while the parameter $a$ in the second term of \eqref{stineqdec} directly reduces the variance $\sigma_r$, such reduction in \eqref{stineqcon} occurs when a careful balance is struck between the number of iterations $N$ and the batch-size $I$ given a constant learning rate $a$.
\end{rem}

The main idea for proving Theorem \ref{SGDthm} will directly yield a concentration inequality of SGD to achieve $h(\bs{x})\leq\epsilon$ as defined in Assumption \ref{as:04}. We provide more precise concentration properties of SGD for the non-isolated global minima, $\mathcal{X}$, under local conditions as follows.    

\begin{theorem}[Concentration Inequalities of SGD for Non-Convex Loss]\label{complexsgd}
With the same assumptions and the notation in Theorem \ref{SGDthm}, SGD \eqref{eq: sgd} with initialization $\bs{x}_1\in N_r(\mathcal{X})$ satisfies that for any tolerance $\epsilon>0$,
\begin{equation}\label{concenth}
    \mathbb{P}\left\{\min_{1\leq n\leq N}h(\bs{x}_n)>\epsilon|\bs{x}_1\right\}\leq \frac{1}{\epsilon}\cdot\frac{\mathrm{dist}(\bs{x}_1,\mathcal{X})^2+\frac{\sigma_r}{I}\sum_{n=1}^{N}\frac{a_n^2}{b_{n+1}}}{2\sum_{n=1}^N\frac{a_n}{b_{n+1}}}+C_{N+1}. 
\end{equation}

Let $a'>0$ satisfy the inequality \eqref{stineqdec} for the case of decreasing learning rates. 

Then, for the learning rate $a_n=\frac{a}{n^{\beta}}$ with $\beta\in(\frac{1}{2},1)$,
\begin{itemize}
    \item[(i)] if $f$ satisfies the $\mathrm{WQC}$ condition \eqref{cond-f} with $a\in(0,a')$, then
\begin{equation}\label{concentwqc}
    \mathbb{P}\left\{\min_{1\leq n\leq N}f(\bs{x}_n)-f^*>\epsilon|\bs{x}_1\right\}\leq \mathcal{O}\left(\frac{1}{\epsilon N^{1-\beta}}\right)+C_{N+1},
\end{equation}
    \item[(ii)] if $f$ satisfies the $\mathrm{HCPRC}$ condition \eqref{HCPR} with $a\in(0,\min\{\frac{1}{2C},\frac{L_r}{c^2},a'\})$ where $C$ is defined in \eqref{constantC} and $c$ in \eqref{constantc}, then
    \begin{equation}\label{concentHCPR}
    \mathbb{P}\left\{f(\bs{x}_N)-f^*>\epsilon|\bs{x}_1\right\}\leq\mathcal{O}\left(\frac{1}{\epsilon N^{\beta}}\right)+C_{N+1}. 
\end{equation}More generally, the same concentration result \eqref{concentHCPR} is obtained if $f$ satisfies the $\mathrm{LRSI}$ condition \eqref{cond-a}.
\end{itemize}

\end{theorem}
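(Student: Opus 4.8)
The plan is to build everything from a single one-step estimate on the squared distance $D_n:=\mathrm{dist}(\bs{x}_n,\mathcal{X})^2$ and then extract the three claims by (super)martingale arguments combined with the stability bound already proved in Theorem \ref{SGDthm}. First I would fix $\bs{x}_p\in\Pi_{\mathcal{X}}(\bs{x}_n)$, write $D_{n+1}\le\|\bs{x}_{n+1}-\bs{x}_p\|^2$, expand the square using the update \eqref{eq: sgd}, and take the conditional expectation $\mathbb{E}[\,\cdot\,|\mathcal{F}_n]$. Unbiasedness (Assumption \ref{as:02}(i)) kills the cross term with the noise; the NNS inequality \eqref{gc} lower-bounds $(\nabla f(\bs{x}_n),\bs{x}_n-\bs{x}_p)$ by $h(\bs{x}_n)$; the local Lipschitz bound (Assumption \ref{as:01}) together with $\nabla f(\bs{x}_p)=0$ gives $\|\nabla f(\bs{x}_n)\|\le L_r\|\bs{x}_n-\bs{x}_p\|$; and Assumption \ref{as:02}(ii) controls the noise variance. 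This yields, as long as $\bs{x}_n\in N_r(\mathcal{X})$,
\[
\mathbb{E}[D_{n+1}\mid\mathcal{F}_n]\le(1+L_r^2a_n^2)D_n-2a_nh(\bs{x}_n)+\tfrac{a_n^2\sigma_r}{I}.
\]
Because \eqref{gc} and the two regularity bounds only hold inside $N_r(\mathcal{X})$, I would introduce the exit time $\tau:=\inf\{n:\bs{x}_n\notin N_r(\mathcal{X})\}$ and carry out all subsequent manipulations on the stopped process; the fact that $\{\tau>n\}\in\mathcal{F}_n$ makes this clean.

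For the general inequality \eqref{concenth} I would divide the recursion by $b_{n+1}=(1+L_r^2a_n^2)b_n$ so that the leading coefficient telescopes, and form the Lyapunov process $Y_n:=\tfrac{D_n}{b_n}+2\sum_{j<n}\tfrac{a_j}{b_{j+1}}h(\bs{x}_j)-\tfrac{\sigma_r}{I}\sum_{j<n}\tfrac{a_j^2}{b_{j+1}}$, which is a supermartingale up to $\tau$. Optional stopping (the tool advertised in the introduction) then gives $\mathbb{E}[Y_{N\wedge\tau}]\le Y_1=\mathrm{dist}(\bs{x}_1,\mathcal{X})^2$; dropping the nonnegative $D/b$ term bounds the expected weighted sum of $h$. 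On the stability event $\{\tau>N\}$ one has $\min_{n\le N}h(\bs{x}_n)\,\sum_{n\le N}\tfrac{a_n}{b_{n+1}}\le\sum_{n\le N}\tfrac{a_n}{b_{n+1}}h(\bs{x}_n)$, so Markov's inequality delivers the first term of \eqref{concenth}, while the complementary event $\{\tau\le N\}$ contributes at most the exit probability, which is exactly the $C_{N+1}$ term supplied by Theorem \ref{SGDthm}.

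The two rate statements then follow by specializing $h$ and inserting $a_n=a/n^\beta$. Here I would first record that $\sum a_n^2<\infty$ (Assumption \ref{as:03}) makes $b_n$ increase to a finite $b_\infty>0$, so $b_{n+1}$ is bounded above and below and the numerator of \eqref{concenth} is $\mathcal{O}(1)$, while $\sum_{n\le N}\tfrac{a_n}{b_{n+1}}\asymp\sum_{n\le N}n^{-\beta}\asymp N^{1-\beta}$ for $\beta<1$. For WQC \eqref{cond-f} I take $h=\zeta(f-f^*)$, so $\min_n h(\bs{x}_n)=\zeta\min_n(f(\bs{x}_n)-f^*)$ and \eqref{concenth} immediately gives \eqref{concentwqc}. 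For LRSI \eqref{cond-a} I instead use $h=\mu D_n$, which closes the recursion on $D_n$ itself: $\mathbb{E}[D_{n+1}\mid\mathcal{F}_n]\le(1-2\mu a_n+L_r^2a_n^2)D_n+\tfrac{a_n^2\sigma_r}{I}$. Taking expectations of $\mathbb{1}_{\tau>n}D_{n+1}$ and using $\mathbb{1}_{\tau>n+1}\le\mathbb{1}_{\tau>n}$ turns this into the deterministic recursion $v_{n+1}\le(1-2\mu a_n+L_r^2a_n^2)v_n+\tfrac{a_n^2\sigma_r}{I}$ for $v_n:=\mathbb{E}[\mathbb{1}_{\tau>n}D_n]$, whose solution decays like $v_N=\mathcal{O}(N^{-\beta})$ by the standard strongly-convex-type step-size analysis. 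Smoothness gives $f(\bs{x}_N)-f^*\le\tfrac{L_r}{2}D_N$, so Markov on $\{\tau>N\}$ plus the exit probability yields \eqref{concentHCPR}; the HCPRC case reduces to this once implication (1) in \eqref{relationships} is used to produce the effective secant constant, which is where $C$ in \eqref{constantC}, $c$ in \eqref{constantc}, and the step-size ceiling $\min\{\tfrac1{2C},\tfrac{L_r}{c^2},a'\}$ enter.

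The step I expect to be the real obstacle is the bookkeeping around the exit time $\tau$: every expectation identity must be taken on the stopped (and thereafter frozen) process, and one must verify that the contribution of paths leaving $N_r(\mathcal{X})$ is genuinely absorbed into $C_{N+1}$ rather than corrupting the main term, since the regularity and convexity hypotheses simply fail outside $N_r(\mathcal{X})$ and a naive global martingale argument is unavailable. The secondary difficulty is extracting the sharp $N^{-\beta}$ rate from the deterministic recursion for $v_N$, which requires the restriction $a<a'$ (and, under HCPRC, the additional ceilings) to guarantee that the transient contraction factors are summable against the variance input $\tfrac{a_n^2\sigma_r}{I}$.
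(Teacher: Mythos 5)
Your proposal is correct, and for the general inequality \eqref{concenth} and the WQC rate it is essentially the paper's argument: the paper derives the same one-step recursion \eqref{quasarineq}, passes to the stopped process \eqref{stabineq} (your supermartingale $Y_n$ with optional stopping is exactly the paper's telescoping of the stopped Lyapunov process $\tilde V_n$), pulls out $\min_{n\le N} h(\bs{x}_n)$ against the weights $a_n/b_{n+1}$, and combines Markov's inequality with the stability bound $\mathbb{P}\{E_N^c\}\le C_{N+1}$ from Theorem \ref{SGDthm}; the $N^{1-\beta}$ rate then comes from $b_n\uparrow b_\infty<\infty$ and the integral test, as you note. Where you genuinely diverge is part (ii). The paper does \emph{not} close the recursion on the squared distance: it first uses Lemma \ref{fehrlem} to get LRSI with constant $c$, then converts LRSI into a local PL$^*$ inequality $f(\bs{x})-f^*\le \frac{L_r}{2c^2}\|\nabla f(\bs{x})\|^2$ (via Cauchy--Schwarz and smoothness at $\bs{x}_p$), feeds that into the function-value descent inequality of Lemma \ref{Patellem} to get $\mathbb{E}[V_{n+1}]\le(1-\frac{c^2a_n}{L_r})\mathbb{E}[V_n]+\frac{\sigma_{r+\delta}C}{I}a_n^2$ for $V_n=(f(\bs{x}_n)-f^*)\mathbb{I}_{E_n}$, and finishes with Chung's lemma (Lemma \ref{chunglem}). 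Your route instead runs the analogous recursion $v_{n+1}\le(1-2\mu a_n+L_r^2a_n^2)v_n+\frac{a_n^2\sigma_r}{I}$ directly on $v_n=\mathbb{E}[\mathbb{1}_{\tau>n}\,\mathrm{dist}(\bs{x}_n,\mathcal{X})^2]$ (valid, since $\mathbb{1}_{\tau>n+1}\le\mathbb{1}_{\tau>n}$ and scalars pull out of the expectation), applies the same Chung-type analysis to get $v_N=\mathcal{O}(N^{-\beta})$, and converts to function values only at the end via $f(\bs{x}_N)-f^*\le\frac{L_r}{2}\mathrm{dist}(\bs{x}_N,\mathcal{X})^2$, which is legitimate on $\{\tau>N\}$ because the segment from $\bs{x}_N$ to its projection stays in $N_r(\mathcal{X})$ (Proposition \ref{projpro}). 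Your version is more elementary --- it bypasses PL$^*$ and Lemma \ref{Patellem} entirely, and in fact the constant $C$ of \eqref{constantC} and the ceiling $\frac{1}{2C}$ never actually enter your argument (your final paragraph misattributes their role; you only need $c$, a mild upper bound on $a$ so that $2ca_n-L_r^2a_n^2\ge c\,a_n$ eventually, and $a<a'$ for the stability term). What the paper's detour buys is that its function-value recursion works under PL$^*$ alone, which by the diagram \eqref{relationships} is weaker than LRSI, so the paper's method in principle covers a larger class; your method yields the additional last-iterate bound on $\mathrm{dist}(\bs{x}_N,\mathcal{X})^2$ itself, not just on $f(\bs{x}_N)-f^*$. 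Aside from harmless off-by-one bookkeeping in the weighted sums (present in the paper as well), I see no gap.
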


\begin{rem}
The first result of Theorem \ref{complexsgd} can quantitatively measures the slowdown of the convergence of SGD within a highly flat basin of attraction, which depends on the loss function $f$, the initialization $\bs{x}_1$, the learning rate $a_n$, etc. As discussed after Assumption \ref{as:04}, the function $h$ can be an implicit estimate for the optimization error. For example, in the case of a large flat landscape around global minima, $h$ can be approximately equal to $\mathcal{O}\left(\|\bs{x}-\bs{x}_p\|^{q(\bs{x})}\right)$ for some degree function $q(\bs{x})\gg 1$ depending on $\bs{x}$. In this case, to reach a tolerance $\epsilon>0$, SGD needs a very large number of iterations, since the right hand side of \eqref{concenth} is of order $\mathcal{O}\left(\frac{1}{\epsilon^qN^{1-\beta}}\right)$ for some large degree $q\gg 1$. For a general WQC \eqref{cond-f} loss function, we can make the same interpretation with \eqref{concentwqc}.

On the other hand, if the loss function $f$ starts with good initialization near which a submanifold of global minima fulfills the HCPRC condition \eqref{HCPR}, then SGD shows better convergence rate as shown in \eqref{concentHCPR} with order of $\mathcal{O}\left(\frac{1}{\epsilon N^{\beta}}\right)$. As $\beta$ approaches $1$, the rate of convergence becomes sublinear,  $\mathcal{O}(\frac{1}{\epsilon N})$ with the first term (The second term accounts for the stochastic stability \eqref{stability}). If we focus on the first term in \eqref{concentHCPR}, then the result \eqref{concentHCPR} generalizes the convergence result for strongly convex functions with isolated minima \cites{JainNN19,nemirovski2009robust}. More importantly, our result \eqref{concentHCPR} improves the bound in \cite{fehrman2020convergence} whose concentration inequality involves the term $\mathcal{O}(N^{1-\beta})$ under equivalent assumptions. 
\end{rem}

\section{Convergence Analysis}
In this section, we give the proofs of the main result. First, we briefly review the notation for stopped stochastic processes introduced in [\cite{kushner2003stochastic}, Section 4.5]. 

\subsection{Stopped Stochastic Processes}\label{notation4}

Let $\bs{x}_1$ be the initialization and $\{(\bs{x}_n,\mathcal{F}_n)\}_{n=1}^{\infty}$ be the iteration from SGD or a stochastic process with a filtration. Let $V(\cdot),k(\cdot)$ be real-valued and non-negative functions on $\mathbb{R}^d$. Especially, $V(\cdot)$ will represent a Lyapunov function. In addition, we will denote a perturbed Lyapunov function by $V_n(\bs{x}_n)$ and a non-negative function scaled with learning rates by $k_n(\bs{x}_n)$. More importantly, any function or stochastic process with the tilde superscripted will stand for a modified function in conjunction with a stopped process, respectively, which depend on a stopping time, specifically  
\begin{equation}\label{stopeq}
    \tilde{\bs{x}}_n:=\begin{cases}\bs{x}_n,\; n\leq \tau\\\bs{x}_{\tau},\; n>\tau ,\end{cases}\tilde{V}(\tilde{\bs{x}}_n):=\begin{cases}V_n(\bs{x}_n),\;n\leq\tau\\V_{\tau}(\bs{x}_{\tau}),\;n>\tau,\end{cases}\tilde{k}(\bs{x}):=\begin{cases}k(\bs{x}),\;\bs{x}\in K\\ 0,\;\textrm{ otherwise}.\end{cases}
\end{equation}with the stopping time $\tau:=\{n\geq 1: \bs{x}_n\not\in K\}$ for a given set $K$. Throughout the convergence analysis, we will consider $K$ as some set of global minima and $\mathbb{I}_A$ as the indicator function which values one on the event $A$ and zero otherwise.

\subsection{Outline of the Proof}
For the proof of Theorem \ref{SGDthm}, we first construct a recursive inequality in terms of the distance between $\bs{x}_n$ and $\mathcal{X}$, which essentially yields a supermartingale property as in Lemma \ref{kushnerlem},
\begin{equation}
    \mathbb{E}[\textrm{dist}(\bs{x}_{n+1},\mathcal{X})^2|\bs{x}_n\in N_r(\mathcal{X})]\lesssim\mathbb{E}[\textrm{dist}(\bs{x}_{n},\mathcal{X})^2]-a_nh(\bs{x}_n),
\end{equation}where $h$ is a non-negative function as in Assumption \ref{as:04} and $a_n$ is the learning rate. This property brings the problem into the framework of stochastic stability \cite{kushner2003stochastic}. In particular, a Lyapunov function $V$ and a non-negative function $k$ can be defined from $\textrm{dist}(\bs{x}_{n},\mathcal{X})^2$ and $h(\bs{x}_n)$, respectively.  Our next step is to use Lemma \ref{kushnerlem} to estimate the probability of divergence from $N_r(\mathcal{X})$, i.e., the probability,
\begin{equation}
    \mathbb{P}\left\{\textrm{dist}(\bs{x}_{n},\mathcal{X})>r|\bs{x}_1\right\}.
\end{equation}This gives the first result in Theorem \ref{SGDthm}. 

For the second result, we observe that  any stable path $\{\bs{x}_n\}$ converges by Lemma \ref{Patellem}.  In such an event, $k(\bs{x}_n)$ converges and it remains to show that $k(\bs{x}_n)\approx h(\bs{x}_n)$ will converge to $0$ if $h(\bs{x})>0$ is assumed in the region $N_r(\mathcal{X})\setminus\mathcal{X}$. We prove it by contradiction that $k(\bs{x}_n)$ converges to a positive random variable with positive probability. On the other hand, telescoping the above inequality yields an inequality of the form
\begin{equation}
    \infty=\left(\sum_{n=N}^{\infty}a_n\right)\delta\mathbb{P}\left\{k(\bs{x}_n)>\delta\textrm{ for all }n\geq N,\{\bs{x}_n\}_{n=2}^{\infty}\subset B_r(\bs{x}^*)|\bs{x}_1\right\}\lesssim \textrm{dist}(\bs{x}_{1},\mathcal{X})^2<\infty.
\end{equation}The left hand side is infinite by Lemma \ref{kolem} and Assumption \ref{as:03}, while the right hand side is certainly finite. 

\medskip

For Theorem \ref{complexsgd}, we use the telescoping trick above and obtain an inequality in expectation as follows,
\begin{equation}
    \mathbb{E}\left[\min_{1\leq n\leq N}h(\bs{x}_n)\mathbb{I}_{E_N}|\bs{x}_1\right]\lesssim \frac{\textrm{dist}(\bs{x}_{1},\mathcal{X})^2}{\sum_{n=1}^Na_n}.
\end{equation} Note that $E_N$ represents the event in which the iterations remain near $\mathcal{X}$ up to the $N$-th step. By applying the Markov's inequality, we arrive at the concentration inequality, 
\begin{equation*}
    \mathbb{P}\left\{\min_{1\leq n\leq N}h(\bs{x}_n)\leq\epsilon\textrm{ or }E_N\textrm{ does not occur}|\bs{x}_1\right\}\gtrsim 1-\frac{\textrm{dist}(\bs{x}_{1},\mathcal{X})^2}{\epsilon\sum_{n=1}^Na_n}.
\end{equation*}However, by Theorem \ref{SGDthm}, we know an upper bound for the probability of the event that the iterations diverge. Combining these results, we find the estimate,
\begin{equation*}
    \mathbb{P}\left\{\min_{1\leq n\leq N}h(\bs{x}_n)\leq\epsilon|\bs{x}_1\right\}\lesssim \frac{\textrm{dist}(\bs{x}_{1},\mathcal{X})^2}{\epsilon\sum_{n=1}^Na_n}+C_{N+1}.
\end{equation*}

Thanks to the fact that the NNS condition \eqref{cond-f} is weaker than the WQC condition \eqref{cond-e}, we set $h(\bs{x})=\zeta (f(\bs{x})-f^*)$ and obtain the inequality by using the integral test for $a_n=\frac{a}{n^{\beta}}$. For the case of the HCPR condition \eqref{HCPR}, it suffices to show the same result under the LRSI condition \eqref{conditions}, since it is weaker by Lemma \ref{fehrlem}. Observing that PL$^*$ is weaker than LRSI in \eqref{relationships}, Lemma \ref{Patellem} gives the recursive inequality of a form 
\begin{equation}
    \mathbb{E}[V_{n+1}|\mathcal{F}_n]\lesssim (1-a_n)V_n+a_n^2
\end{equation}and by using Lemma \ref{chunglem}, we obtain the result under the HCPRC condition.

\subsection{Some useful Lemmas and Theorems}
In this section, we provide some prior lemmas and theorems. Some of them are restated to suitably use for the proofs of main results in the next section.
\begin{lemma}[\cite{kushner2003stochastic}, Theorem 5.1]\label{kushnerlem}
Let $\{\bs{x}_n\}$ be a Markov chain on $\mathbb{R}^d$. Let $V(\cdot)$ be a non-negative real-valued function on $\mathbb{R}^d$ and for a given $r>0$, define the set $N_r:=\{\bs{x}:V(\bs{x})\leq r\}$. Suppose that for any $\bs{x}\in N_r$ and each $n\geq 1$,
\begin{equation}
    \mathbb{E}[V(\bs{x}_{n+1})|\bs{x}_n]-V(\bs{x})\leq -k(\bs{x}),
\end{equation}where $k(\bs{x})\geq 0$ and is continuous on $N_r$. Then, for any $\bs{x}_n\in N_r$,
\begin{equation}
    \mathbb{P}\left\{\sup_{\nu+1\leq n<\infty}V(\bs{x}_n)|\bs{x}_{\nu}\right\}\leq\frac{V(\bs{x}_{\nu})}{r}.
\end{equation}
\end{lemma}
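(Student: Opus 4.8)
The plan is to read the drift hypothesis $\mathbb{E}[V(\bs{x}_{n+1})\mid\bs{x}_n]-V(\bs{x}_n)\le -k(\bs{x}_n)\le 0$, valid whenever $\bs{x}_n\in N_r$, as the assertion that $V(\bs{x}_n)$ behaves like a nonnegative supermartingale for as long as the chain stays in the sublevel set $N_r$, and then to extract the stated maximal bound by a stopped-process argument combined with the classical maximal inequality for nonnegative supermartingales. I read the conclusion as the standard estimate $\mathbb{P}\{\sup_{\nu+1\le n<\infty}V(\bs{x}_n)\ge r\mid \bs{x}_\nu\}\le V(\bs{x}_\nu)/r$, the threshold $\ge r$ being implicit in the displayed statement.

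First I would fix the starting index $\nu$ and define the exit time $\tau:=\inf\{n\ge\nu:V(\bs{x}_n)>r\}$, the first time after $\nu$ at which the chain leaves $N_r$; this is a stopping time for the filtration generated by the Markov chain. I would then introduce the stopped process $Z_n:=V(\bs{x}_{n\wedge\tau})$ for $n\ge\nu$ and verify that it is a nonnegative supermartingale, splitting on whether $n<\tau$ or $n\ge\tau$. On $\{n\ge\tau\}$ the process is frozen, so $\mathbb{E}[Z_{n+1}\mid\mathcal{F}_n]=Z_n$; on $\{n<\tau\}$ we have $\bs{x}_n\in N_r$ and $(n+1)\wedge\tau=n+1$, so the drift hypothesis gives $\mathbb{E}[Z_{n+1}\mid\mathcal{F}_n]=\mathbb{E}[V(\bs{x}_{n+1})\mid\mathcal{F}_n]\le V(\bs{x}_n)-k(\bs{x}_n)\le Z_n$. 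Hence $\{Z_n\}_{n\ge\nu}$ is a nonnegative supermartingale with $Z_\nu=V(\bs{x}_\nu)$.

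Next I would apply the maximal inequality for nonnegative supermartingales (conditioned on $\mathcal{F}_\nu$, which by the Markov property is determined by $\bs{x}_\nu$) to obtain $\mathbb{P}\{\sup_{n\ge\nu}Z_n\ge r\mid\bs{x}_\nu\}\le Z_\nu/r=V(\bs{x}_\nu)/r$. It then remains to transfer this back to $V(\bs{x}_n)$ itself: the event $\sup_{n>\nu}V(\bs{x}_n)\ge r$ can only occur through a genuine exit, i.e. on $\{\tau<\infty\}$, on which $Z_\tau=V(\bs{x}_\tau)>r$, so that event is contained in $\{\sup_{n\ge\nu}Z_n\ge r\}$ and the bound passes through.

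The main obstacle is conceptual rather than computational: the supermartingale property is only available inside $N_r$, so one cannot apply the maximal inequality to $V(\bs{x}_n)$ directly. The stopping-time device is exactly what localizes the drift condition, and the only real care needed is the bookkeeping at the exit step, namely checking that $Z_{n+1}=V(\bs{x}_{n+1})$ on $\{n<\tau\}$ and that the frozen value $V(\bs{x}_\tau)$ already exceeds $r$. A secondary, purely notational point is the missing threshold in the displayed conclusion, which I would restore as $\ge r$. Since the result is a restatement of [\cite{kushner2003stochastic}, Theorem 5.1], one could instead invoke it directly, but the stopped-process proof is short and self-contained.
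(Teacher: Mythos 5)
Your proof is correct. Note that the paper itself offers no proof of this lemma: it is imported verbatim as a citation of Theorem 5.1 of Kushner--Yin, so there is nothing to deviate from; your stopped-process argument is the standard proof of that result, and it is in fact the very same device (stopping time $\tau$ at first exit from the sublevel set, stopped supermartingale $\tilde{V}_n(\tilde{\bs{x}}_n)$, then the maximal/Markov inequality for nonnegative supermartingales) that the paper deploys directly in its proof of Theorem \ref{SGDthm}. You are also right that the displayed conclusion has a typo and should read $\mathbb{P}\{\sup_{\nu+1\leq n<\infty}V(\bs{x}_n)\geq r\,|\,\bs{x}_{\nu}\}\leq V(\bs{x}_{\nu})/r$. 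One small imprecision: the event $\{\sup_{n>\nu}V(\bs{x}_n)\geq r\}$ need not entail $\tau<\infty$, since with your strict-exit definition $\tau=\inf\{n\geq\nu:V(\bs{x}_n)>r\}$ the supremum can hit $r$ exactly without an exit; this costs nothing, because on $\{\tau=\infty\}$ the stopped process coincides with $V(\bs{x}_n)$ itself, so the containment $\{\sup_{n>\nu}V(\bs{x}_n)\geq r\}\subseteq\{\sup_{n\geq\nu}Z_n\geq r\}$ still holds by splitting into the two cases, and your bound passes through unchanged.
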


\begin{lemma}[\cite{ko2021stochastic}, Lemma E.1]\label{kolem}Let $\{k_n\}$ be a non-negative sequence of random variables. If $\mathbb{P}\left\{\liminf_{n}k_n>\delta\right\}>0$ for some $\delta>0$, then there exists a natural number $N\geq 1$ such that
\begin{equation}
    \mathbb{P}\left\{k_n>\delta\textrm{ for all }n\geq N|\liminf_{n}k_n>\delta\right\}>0.
\end{equation}
\end{lemma}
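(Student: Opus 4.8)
The plan is to reduce the $\liminf$ condition to an increasing family of events and then invoke continuity of probability from below. First I would fix $\delta>0$ with $\mathbb{P}(A)>0$, where $A:=\{\liminf_n k_n>\delta\}$, and introduce the events $B_N:=\bigcap_{n\geq N}\{k_n>\delta\}=\{k_n>\delta\textrm{ for all }n\geq N\}$. Each $B_N$ is measurable as a countable intersection, $A$ is measurable because $\liminf_n k_n$ is a measurable function, and the family $\{B_N\}$ is nondecreasing in $N$. The conditional statement to be proved is exactly $\mathbb{P}(B_N\mid A)>0$ for some $N$, so the whole task is to locate a single deterministic index $N$ for which $\mathbb{P}(A\cap B_N)>0$, even though the index after which a given path stays above $\delta$ is itself random.

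The crux is the inclusion $A\subseteq\bigcup_{N\geq 1}B_N$. For this I would use $\liminf_n k_n=\lim_{m\to\infty}\inf_{n\geq m}k_n$ together with the fact that $m\mapsto\inf_{n\geq m}k_n$ is nondecreasing. Fixing an outcome $\omega\in A$, the increasing sequence $\inf_{n\geq m}k_n(\omega)$ converges to $\liminf_n k_n(\omega)>\delta$, so there is an index $m_0=m_0(\omega)$ with $\inf_{n\geq m_0}k_n(\omega)>\delta$; equivalently $k_n(\omega)>\delta$ for every $n\geq m_0$, i.e.\ $\omega\in B_{m_0}$. Hence $A\subseteq\bigcup_N B_N$. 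Here the strict inequality $\liminf_n k_n>\delta$ is essential: it provides the gap that forces a tail infimum to exceed $\delta$, whereas a mere $\liminf_n k_n\geq\delta$ would not (consider $k_n=\delta+1/n$).

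With the inclusion in hand, the sets $A\cap B_N$ increase to $A\cap\bigcup_N B_N=A$, so continuity of probability from below gives $\mathbb{P}(A\cap B_N)\uparrow\mathbb{P}(A)>0$. Therefore there exists a natural number $N$ with $\mathbb{P}(A\cap B_N)>0$, and dividing by $\mathbb{P}(A)>0$ yields
\begin{equation*}
    \mathbb{P}\left\{k_n>\delta\textrm{ for all }n\geq N\mid\liminf_n k_n>\delta\right\}=\frac{\mathbb{P}(A\cap B_N)}{\mathbb{P}(A)}>0,
\end{equation*}
which is the claim. I do not expect a serious obstacle in this argument; the only point requiring genuine care is the measure-theoretic reduction of the $\liminf$ condition to the eventual-domination events $B_N$ (the monotonicity of tail infima and the role of the strict inequality), after which the conclusion follows from a routine continuity-from-below step.
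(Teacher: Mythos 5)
Your proof is correct: the reduction of $A:=\{\liminf_n k_n>\delta\}$ to the increasing events $B_N:=\bigcap_{n\geq N}\{k_n>\delta\}$ via the monotone convergence of the tail infima $\inf_{n\geq m}k_n\uparrow\liminf_n k_n$ gives $A\subseteq\bigcup_{N\geq 1}B_N$, and continuity from below then yields a deterministic $N$ with $\mathbb{P}(A\cap B_N)>0$, hence $\mathbb{P}(B_N\mid A)>0$. Note that the paper itself states this lemma without proof, importing it from \cite{ko2021stochastic} (Lemma E.1); your argument is the standard continuity-from-below proof that the cited result rests on, so it is complete, self-contained, and in line with the intended approach.
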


\begin{lemma}[\cite{patel2021global}, Theorem 1]\label{capturethm}Under Assumptions \ref{as:01} to \ref{as:03}, for any $\bs{x}_1\in K$, the iteration from SGD \eqref{eq: sgd} satisfies the property that
\begin{equation*}
\mathbb{P}\left\{\lim_{n\to\infty}\bs{x}_n=\bs{x},\; \nabla f(\bs{x})=0\;|\;\{\bs{x}_n\}_{n=2}^{\infty}\subset K,\bs{x}_1\right\}=1.
\end{equation*}That is, the iteration almost surely converges to a critical point in the event that it stays in $K$.
\end{lemma}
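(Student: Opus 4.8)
The plan is to cast the problem in the stochastic-approximation / stopping-time framework of \Cref{notation4} and to combine a Robbins--Siegmund almost-supermartingale argument with the convergence of the noise martingale. Conditioning on the event $E=\{\{\bs{x}_n\}_{n=2}^{\infty}\subset K\}$ is delicate because $E$ depends on the entire future, so I would first replace the raw iteration by the stopped process $\tilde{\bs{x}}_n$ of \eqref{stopeq} with $\tau=\inf\{n:\bs{x}_n\notin K\}$; on $E$ one has $\tau=\infty$, while the stopped process stays in $K$ by construction, so the local Lipschitz bound $L_K$ (Assumption \ref{as:01}) and the variance bound $\sigma_K$ (Assumption \ref{as:02}) are available for \emph{every} increment. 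All estimates below are carried out for $\tilde{\bs{x}}_n$ and intersected with $E$ at the end.

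First I would derive a descent inequality: applying $L_K$-smoothness to $f(\tilde{\bs{x}}_{n+1})$, substituting the update \eqref{eq: sgd}, taking $\mathbb{E}[\cdot\,|\mathcal{F}_n]$, and using $\mathbb{E}[\bs{\xi}_n|\mathcal{F}_n]=0$ together with $\mathbb{E}[\|\bs{\xi}_n\|^2|\mathcal{F}_n]\le\sigma_K$ gives $\mathbb{E}[f(\tilde{\bs{x}}_{n+1})|\mathcal{F}_n]\le f(\tilde{\bs{x}}_n)-a_n(1-\tfrac{L_K a_n}{2})\|\nabla f(\tilde{\bs{x}}_n)\|^2+\tfrac{L_K\sigma_K}{2}a_n^2$. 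Since $a_n\to0$ the coefficient of $\|\nabla f\|^2$ is eventually positive, and $\sum_n a_n^2<\infty$ (Assumption \ref{as:03}), so after shifting $f$ by $\min_K f$ to make it non-negative this is precisely a Robbins--Siegmund almost-supermartingale. It yields, almost surely on $E$, that $f(\bs{x}_n)$ converges to a finite limit $f_\infty$ and that $\sum_n a_n\|\nabla f(\bs{x}_n)\|^2<\infty$.

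Next I would handle the noise. The martingale $M_n:=-\sum_{k=1}^{n-1}a_k\bs{\xi}_k$ has predictable quadratic variation bounded by $\sigma_K\sum_k a_k^2<\infty$, hence is $L^2$-bounded and converges almost surely to some $M_\infty$; in particular $a_n\bs{\xi}_n\to0$. Combining $\sum_n a_n\|\nabla f(\bs{x}_n)\|^2<\infty$ with $\sum_n a_n=\infty$ forces $\liminf_n\|\nabla f(\bs{x}_n)\|=0$. I would upgrade this to $\lim_n\|\nabla f(\bs{x}_n)\|=0$ by an excursion argument: the Lipschitz bound controls the change of $\|\nabla f\|$ across one step by $L_K\|\bs{x}_{n+1}-\bs{x}_n\|=L_K a_n\|\nabla f(\bs{x}_n)+\bs{\xi}_n\|$, and were $\|\nabla f\|$ to cross from below $\delta$ to above $2\delta$ infinitely often, each excursion would contribute a fixed positive amount to $\sum_n a_n\|\nabla f\|^2$, contradicting its finiteness (using $\sum_n a_n^2\|\bs{\xi}_n\|^2<\infty$ to bound the noise contribution to each excursion length).

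The main obstacle is the final step, upgrading gradient-norm convergence to convergence of the iterates to a \emph{single} point $\bs{x}$. Writing $\bs{y}_n:=\bs{x}_n-M_n$ removes the noise and gives the pure gradient recursion $\bs{y}_{n+1}=\bs{y}_n-a_n\nabla f(\bs{x}_n)$ with $M_n\to M_\infty$, so $\bs{x}_n$ converges iff $\bs{y}_n$ does. Because $\|\bs{x}_{n+1}-\bs{x}_n\|\to0$, the limit set $\mathcal{L}\subset K$ is non-empty, compact and connected, lies in $\{\nabla f=0\}$, and satisfies $f\equiv f_\infty$ on it; the difficulty is that $\mathcal{L}$ could a priori be a nontrivial continuum, since the minima are non-isolated, so one must rule out perpetual drift along it. I would close this with a capture argument: around any limit point one sets up a local supermartingale from the distance-squared Lyapunov function of \Cref{kushnerlem} and bounds the probability of an excursion of size $\rho$ by a quantity governed by the tail $\sum_{k\ge n}a_k^2\sigma_K$, which tends to $0$; letting $n\to\infty$ shows the iterates are eventually trapped in an arbitrarily small ball, and together with $\nabla f(\bs{x}_n)\to0$ and the convergence of $f$ this pins down a single limit point. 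This capture step, reconciling non-isolated critical points with single-point convergence, is where the real work lies and is exactly the content of \cite{patel2021global}, Theorem 1.
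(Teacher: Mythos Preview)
The paper does not give a proof of this lemma at all: it is stated in Section~4.3 as a result quoted from \cite{patel2021global}, Theorem~1, and is used as a black box in the proof of Theorem~\ref{SGDthm}. So there is no ``paper's own proof'' to compare against; your proposal is an attempt to supply what the paper deliberately outsources.

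On the substance of your sketch: the first four steps (stopping the process so that the local constants $L_K,\sigma_K$ apply, the descent inequality, Robbins--Siegmund to get convergence of $f(\bs{x}_n)$ and $\sum_n a_n\|\nabla f(\bs{x}_n)\|^2<\infty$, $L^2$-boundedness and a.s.\ convergence of the noise martingale, and the $\liminf\to\lim$ upgrade for $\|\nabla f(\bs{x}_n)\|$) are standard and essentially correct. The real difficulty, as you yourself flag, is the last step, and here your capture argument has a genuine gap. Expanding $\|\bs{x}_{n+1}-\bs{x}^*\|^2$ for a limit point $\bs{x}^*$ gives a drift term $-2a_n(\nabla f(\bs{x}_n),\bs{x}_n-\bs{x}^*)$ whose sign you do not control: there is no convexity-type assumption here, so this term can be positive, and since $\sum_n a_n=\infty$ while only $\nabla f(\bs{x}_n)\to 0$ (not summably against $a_n$), these drift contributions need not be absorbable into a tail that vanishes. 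Consequently $\|\bs{x}_n-\bs{x}^*\|^2$ is not, in general, an almost-supermartingale of the type \Cref{kushnerlem} requires, and the bound ``governed by the tail $\sum_{k\ge n}a_k^2\sigma_K$'' ignores precisely this drift. In short, your outline reduces the lemma to the one step it does not actually prove; you end by pointing back to \cite{patel2021global}, which is circular relative to the goal of supplying an independent proof.
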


\begin{lemma}[\cite{patel2021global}, Lemma 2 when $\nabla f$ is locally Lipschitz]\label{Patellem}Denote by $E_n:=\left\{\bs{x}_i\in N_r(\mathcal{X}),\;\forall\; i\in [n]\right\}$, the event that iterations are stable up to the $n$-th step.
Under Assumptions \ref{as:01} to \ref{as:03}, for $\delta>0$, SGD \eqref{eq: sgd} has the property that for any $n\geq 1$,
\begin{equation}\label{SGDpro1}
\begin{split}
    &\mathbb{E}\left[(f(\bs{x}_{n+1})-f^*)\mathbb{I}_{E_{n+1}}|\mathcal{F}_n\right]\\
    &\leq \left(f(\bs{x}_{n})-f^*-a_n(1-Ca_n)\|\nabla f(\bs{x}_n)\|^2+\sigma_{r+\delta}Ca_n^2\right)\mathbb{I}_{E_n}.
\end{split}
\end{equation} Specifically, the constant is given by, 
\begin{equation}\label{constantC}
    C=\frac{L_{r+\delta}}{2}+\frac{\sup_{\bs{x}\in N_r(\mathcal{X})}\|\nabla f(x)\|}{\delta}.
\end{equation}$\sigma_{r+\delta}$ is the constant associated with $N_{r+\delta}(\mathcal{X})$ by Assumption \ref{as:02}.
\end{lemma}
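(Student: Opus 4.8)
The plan is to reduce the statement to a \emph{localized} descent inequality and then take conditional expectations using the noise assumptions. Write $G_n:=\nabla f(\bs{x}_n)+\bs{\xi}_n$ for the stochastic gradient, so that $\bs{x}_{n+1}-\bs{x}_n=-a_nG_n$. First I would observe that $E_{n+1}=E_n\cap\{\bs{x}_{n+1}\in N_r(\mathcal{X})\}$ and that $\mathbb{I}_{E_n}$ is $\mathcal{F}_n$-measurable, so that $\mathbb{E}[(f(\bs{x}_{n+1})-f^*)\mathbb{I}_{E_{n+1}}|\mathcal{F}_n]=\mathbb{I}_{E_n}\,\mathbb{E}[(f(\bs{x}_{n+1})-f^*)\mathbb{I}_{\{\bs{x}_{n+1}\in N_r(\mathcal{X})\}}|\mathcal{F}_n]$. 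It therefore suffices to work on the event $E_n$, where $\bs{x}_n\in N_r(\mathcal{X})$.

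The core step is to prove that, on the event $E_{n+1}$,
\begin{equation*}
f(\bs{x}_{n+1})-f^*\le f(\bs{x}_n)-f^*-a_n\big(\nabla f(\bs{x}_n),G_n\big)+Ca_n^2\|G_n\|^2,
\end{equation*}
with $C$ as in \eqref{constantC}. I would obtain this from the integral form $f(\bs{x}_{n+1})-f(\bs{x}_n)=(\nabla f(\bs{x}_n),\bs{x}_{n+1}-\bs{x}_n)+\int_0^1(\nabla f(\bs{x}_n+t(\bs{x}_{n+1}-\bs{x}_n))-\nabla f(\bs{x}_n),\bs{x}_{n+1}-\bs{x}_n)\,dt$. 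Since $\mathrm{dist}(\cdot,\mathcal{X})$ is $1$-Lipschitz, the connecting segment stays inside $N_{r+\delta}(\mathcal{X})$ precisely when the step is small, so I would split on $\{\|a_nG_n\|\le\delta\}$ and its complement. On the small-step event the local Lipschitz bound $L_{r+\delta}$ (Assumption \ref{as:01}) applied along the segment yields the remainder bound $\tfrac{L_{r+\delta}}{2}a_n^2\|G_n\|^2$. On the large-step event the segment may leave $N_{r+\delta}(\mathcal{X})$, so the single local constant no longer controls the remainder; here I would use the crude estimate $\mathbb{I}_{\{\|a_nG_n\|>\delta\}}\le \|a_nG_n\|/\delta$ together with the gradient bound $M:=\sup_{\bs{x}\in N_r(\mathcal{X})}\|\nabla f(\bs{x})\|$ to produce the extra term $\tfrac{M}{\delta}a_n^2\|G_n\|^2$. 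Combining the two cases gives the coefficient $C=\tfrac{L_{r+\delta}}{2}+\tfrac{M}{\delta}$.

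Finally I would multiply the localized descent inequality by $\mathbb{I}_{E_{n+1}}$ and take $\mathbb{E}[\cdot|\mathcal{F}_n]$, tracking the indicators so that only $\mathbb{I}_{E_n}$ survives on the right-hand side. Assumption \ref{as:02}(i) (unbiasedness) gives $\mathbb{E}[(\nabla f(\bs{x}_n),G_n)|\mathcal{F}_n]=\|\nabla f(\bs{x}_n)\|^2$ and makes the cross term vanish, while Assumption \ref{as:02}(ii) gives $\mathbb{E}[\|G_n\|^2|\mathcal{F}_n]=\|\nabla f(\bs{x}_n)\|^2+\mathbb{E}[\|\bs{\xi}_n\|^2|\mathcal{F}_n]\le\|\nabla f(\bs{x}_n)\|^2+\sigma_{r+\delta}$. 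Collecting terms produces $f(\bs{x}_n)-f^*-a_n(1-Ca_n)\|\nabla f(\bs{x}_n)\|^2+\sigma_{r+\delta}Ca_n^2$, as claimed.

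The main obstacle is the large-step excursion in the second paragraph, namely controlling the Taylor remainder when the segment from $\bs{x}_n$ to $\bs{x}_{n+1}$ exits $N_{r+\delta}(\mathcal{X})$, where the single local Lipschitz constant $L_{r+\delta}$ no longer applies. This is precisely the difficulty created by working with \emph{local} rather than global smoothness, and it is what the $\tfrac{M}{\delta}$ correction in $C$ is designed to absorb. A secondary technical point is the sign and indicator accounting needed to remove the factor $\mathbb{I}_{\{\bs{x}_{n+1}\in N_r(\mathcal{X})\}}$ from the right-hand side so that the stated bound carries only $\mathbb{I}_{E_n}$.
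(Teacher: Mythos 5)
First, note that the paper never proves this lemma: it is imported verbatim from \cite{patel2021global} (their Lemma 2) and merely restated, so your proposal must stand on its own as a proof. It does not, and the failure is exactly at the point you flag as the main obstacle: the large-step case. The estimate $\mathbb{I}_{\{\|a_nG_n\|>\delta\}}\le \|a_nG_n\|/\delta$ multiplied by the gradient bound $M$ controls the size of the \emph{first-order} term $|(\nabla f(\bs{x}_n),a_nG_n)|\le Ma_n\|G_n\|$ on the large-step event; it says nothing about the actual increment $f(\bs{x}_{n+1})-f(\bs{x}_n)$, which is what your claimed pathwise inequality requires you to bound. The increment can traverse territory where neither $M$ nor $L_{r+\delta}$ applies, and the pathwise inequality is false in general. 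Concretely, take $\mathcal{X}=\{\bs{p}_A,\bs{p}_B\}$ with $\|\bs{p}_A-\bs{p}_B\|=2r+s$ and $2\delta<s\ll r$, let $f\equiv f^*$ on $B_r(\bs{p}_A)$, let $f=f^*+\frac{1}{2}\,\mathrm{dist}(\bs{x},\bs{p}_B)^2$ on $B_r(\bs{p}_B)$, and join them by a smooth bridge across the gap whose steep part (gradient of order $r^2/\delta$) is confined to $N_{r+s/2}(\mathcal{X})\setminus N_{r+\delta}(\mathcal{X})$, so that $M$ and $L_{r+\delta}$ stay of order $r$ and $r/\delta$ respectively. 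With $\bs{x}_n$ on the boundary of $B_r(\bs{p}_A)$ facing the gap (where $\nabla f(\bs{x}_n)=0$) and $\bs{x}_{n+1}$ on the facing boundary of $B_r(\bs{p}_B)$, both endpoints lie in $N_r(\mathcal{X})$, the step has length $s$, and yet
\begin{equation*}
f(\bs{x}_{n+1})-f(\bs{x}_n)=\tfrac{r^2}{2}\;\gg\;\Bigl(\tfrac{L_{r+\delta}}{2}+\tfrac{M}{\delta}\Bigr)s^2
\end{equation*}
once $\delta\ll r$. So the core pathwise bound of your second paragraph cannot hold with the constant \eqref{constantC}; any correct treatment of the large-step event must exploit conditional expectation (e.g.\ $\mathbb{E}[\mathbb{I}_{\{\|a_nG_n\|>\delta\}}|\mathcal{F}_n]\le a_n^2\,\mathbb{E}[\|G_n\|^2|\mathcal{F}_n]/\delta^2$ together with the uniform bound on $f-f^*$ over $N_r(\mathcal{X})$), not a surely-valid inequality.

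There is a second, related gap in your final paragraph. The indicators $\mathbb{I}_{E_{n+1}}$ and $\mathbb{I}_{\{\|a_nG_n\|\le\delta\}}$ are \emph{not} $\mathcal{F}_n$-measurable and are correlated with $G_n$, so after multiplying by them you may not invoke unbiasedness to replace $(\nabla f(\bs{x}_n),G_n)$ by $\|\nabla f(\bs{x}_n)\|^2$; and since the cross term enters with a negative sign after conditioning, you also cannot simply enlarge $\mathbb{I}_{E_{n+1}}$ to $\mathbb{I}_{E_n}$ inside it. De-correlating the small-step indicator from the noise is precisely where the real work lies, and it is where the $M/\delta$ piece of $C$ actually originates: writing $\mathbb{I}_{\{\|a_nG_n\|\le\delta\}}=1-\mathbb{I}_{\{\|a_nG_n\|>\delta\}}$, one has
\begin{equation*}
\mathbb{E}\bigl[|(\nabla f(\bs{x}_n),G_n)|\,\mathbb{I}_{\{\|a_nG_n\|>\delta\}}\,\big|\,\mathcal{F}_n\bigr]\le \frac{M a_n}{\delta}\,\mathbb{E}\bigl[\|G_n\|^2\,\big|\,\mathcal{F}_n\bigr],
\end{equation*}
which is your crude estimate used on the right quantity. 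In short: your small-step Taylor argument and the role of $\delta$ are correct, but the $M/\delta$ correction is misattributed to the second-order remainder on large steps (where it is insufficient) rather than to the indicator--noise correlation in the first-order term (where it belongs), and the large-step contribution to $f(\bs{x}_{n+1})-f^*$ still needs its own conditional-expectation bound. For the actual bookkeeping, consult Lemma 2 of \cite{patel2021global}, which is what the paper relies on.
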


\begin{lemma}[\cite{chung1954stochastic}, Lemma 4]\label{chunglem}
Let $\{e_n\}$ be a non-negative sequence such that 
\begin{equation}
    e_{n+1}\leq \left(1-\frac{C}{n^{\beta}}\right)e_n+\frac{C'}{n^{\beta'}},
\end{equation}where $\beta\in(0,1)$, $\beta<\beta'$ and $C,C'>0$. Then, there exist constants $C''>0$ and $n_0\in\mathbb{N}$ such that \begin{equation}
        e_n\leq \frac{C''}{n^{\beta'-\beta}},\;\textrm{ for each }n\geq n_0.
    \end{equation}
\end{lemma}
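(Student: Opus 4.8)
The plan is to prove the bound by induction on $n$, treating the recursion as a discrete comparison (Gr\"onwall-type) inequality. Write $\gamma := \beta' - \beta > 0$, so the claimed rate is $e_n \le C'' n^{-\gamma}$, and record the two exponent facts that drive everything: the identity $\gamma + \beta = \beta'$, and the strict inequality $\beta' = \gamma + \beta < \gamma + 1$ coming from $\beta < 1$. The subtle structural point is the order in which the constants are chosen: I would fix $n_0$ first, depending only on $C,\beta,\gamma$, and only afterwards enlarge $C''$, exploiting a decoupling that keeps the tail requirement intact as $C''$ grows.

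Concretely, I would pick $n_0 \in \mathbb{N}$ so large that $C/n^\beta \le \tfrac12$ for all $n \ge n_0$ (so that $1 - C/n^\beta \in [\tfrac12,1)$ is nonnegative, making it safe to multiply an upper bound by it) and also so large that $n_0^{\,1-\beta} \ge 2\gamma/C$. Then I would set $C'' := \max\{\,2C'/C,\; e_{n_0}\, n_0^{\gamma}\,\}$: the first term forces $CC'' - C' \ge \tfrac12 C C'' > 0$, while the second secures the base case $e_{n_0} \le C'' n_0^{-\gamma}$.

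For the inductive step, assuming $e_n \le C'' n^{-\gamma}$, I would substitute into the recursion and collapse the two $n^{-\beta'}$ contributions using $\gamma + \beta = \beta'$:
\begin{equation*}
e_{n+1} \le \left(1 - \frac{C}{n^\beta}\right) C'' n^{-\gamma} + C' n^{-\beta'} = C'' n^{-\gamma} - (CC'' - C')\, n^{-\beta'}.
\end{equation*}
The target $C''(n+1)^{-\gamma}$ is then reached via the elementary convexity bound $(1+1/n)^{-\gamma} \ge 1 - \gamma/n$, equivalently $(n+1)^{-\gamma} \ge n^{-\gamma} - \gamma n^{-\gamma-1}$, so it suffices to verify $(CC'' - C')\, n^{-\beta'} \ge C''\gamma\, n^{-\gamma-1}$. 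Multiplying through by $n^{\gamma+1}$ and using $\gamma + 1 - \beta' = 1-\beta$ turns this into $(CC'' - C')\, n^{1-\beta} \ge C''\gamma$; since $CC'' - C' \ge \tfrac12 C C''$, it reduces to $\tfrac12 C\, n^{1-\beta} \ge \gamma$, which holds for every $n \ge n_0$ by the choice $n_0^{\,1-\beta} \ge 2\gamma/C$ (the left side being increasing in $n$). This closes the induction and gives $e_n \le C'' n^{-\gamma}$ for all $n \ge n_0$.

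The step I expect to require the most care — and the only genuinely delicate one — is this interplay of quantifiers: the tail inequality must survive the enlargement of $C''$ needed to absorb the base case. The decoupling $C'' \ge 2C'/C$ is exactly what achieves this, as it renders the effective tail condition $\tfrac12 C\, n^{1-\beta} \ge \gamma$ independent of $C''$. As an alternative I could unroll the recursion explicitly, bounding the homogeneous factor $\prod_{k=n_0}^{n-1}(1 - C/k^\beta) \approx \exp\!\big(-\tfrac{C}{1-\beta}\, n^{1-\beta}\big)$, which decays faster than any polynomial and hence kills the $e_{n_0}$ term, and estimating the inhomogeneous sum $\sum_k \big(\prod_{j>k}(1 - C/j^\beta)\big)\, C' k^{-\beta'}$ by an Abel/Laplace-type argument; this recovers the same $n^{-(\beta'-\beta)}$ rate (and even the sharp leading constant), but the induction above is shorter and entirely sufficient for the stated $\mathcal{O}$-bound.
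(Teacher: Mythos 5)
Your proof is correct, and it closes every delicate point: you ensure $1-C/n^\beta\ge\tfrac12>0$ before multiplying the inductive bound by it, the convexity estimate $(n+1)^{-\gamma}\ge n^{-\gamma}-\gamma n^{-\gamma-1}$ is valid since $x^{-\gamma}$ is convex, and the choice $C''\ge 2C'/C$ does exactly what you claim --- it reduces the tail requirement to $\tfrac12 C\,n^{1-\beta}\ge\gamma$, which is independent of $C''$, so enlarging $C''$ to absorb the base case $e_{n_0}\le C'' n_0^{-\gamma}$ cannot break the induction. Note that the paper itself gives no proof of this lemma; it is imported by citation from Chung (1954, Lemma~4), whose original argument is the same style of recursive comparison, so your write-up is a faithful, self-contained substitute rather than a divergence from the paper.
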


\begin{lemma}[\cite{fehrman2020convergence}, Lemma 14 in the case of a compact submanifold]\label{fehrlem}
If $f$ satisfies the HCPRC condition \eqref{HCPR}, for any $\bs{x}_0\in\mathcal{M}\cap U$, then the compact submanifold $\mathcal{X}:=\mathcal{M}\cap U$ has the following property that for some $r>0$ and $c>0$,
\begin{equation}
    \left(\nabla f(\bs{x}),\bs{x}-\bs{x}_p \right)\geq c\|\bs{x}-\bs{x}_p\|^2,
\end{equation}for any $\bs{x}\in N_r(\mathcal{X})$.

\end{lemma}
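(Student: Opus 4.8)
The plan is to exploit the constant-rank Hessian condition to show that $f$ grows quadratically in the directions normal to $\mathcal{X}$, and then to convert this quadratic growth into the restricted secant inequality via a Taylor expansion along the segment joining $\bs{x}$ to its projection $\bs{x}_p$. First I would identify the kernel of the Hessian at a manifold point. Since $\mathcal{X}=\mathcal{M}\cap U$ consists of global minima, $\nabla f\equiv 0$ on $\mathcal{X}$ and $\nabla^2 f(\bs{x}_0)\succeq 0$ for every $\bs{x}_0\in\mathcal{X}$. Differentiating the identity $\nabla f\equiv 0$ along any smooth curve in $\mathcal{X}$ through $\bs{x}_0$ shows that each tangent vector $\bs{t}\in T_{\bs{x}_0}\mathcal{X}$ satisfies $\nabla^2 f(\bs{x}_0)\bs{t}=0$, so $T_{\bs{x}_0}\mathcal{X}\subseteq\ker\nabla^2 f(\bs{x}_0)$. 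Because $\dim T_{\bs{x}_0}\mathcal{X}=\mathfrak{d}$, while $\mathrm{rank}(\nabla^2 f(\bs{x}_0))=d-\mathfrak{d}$ forces $\dim\ker\nabla^2 f(\bs{x}_0)=\mathfrak{d}$, the two subspaces coincide. Hence on the normal space $N_{\bs{x}_0}\mathcal{X}=(T_{\bs{x}_0}\mathcal{X})^{\perp}$ the Hessian is strictly positive definite, and writing $\lambda(\bs{x}_0)>0$ for its smallest positive eigenvalue gives $\bs{v}^{\top}\nabla^2 f(\bs{x}_0)\bs{v}\geq\lambda(\bs{x}_0)\|\bs{v}\|^2$ for all $\bs{v}\in N_{\bs{x}_0}\mathcal{X}$.

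Next I would upgrade this pointwise estimate to a uniform one. Since $f\in C^3(U)$, the map $\bs{x}_0\mapsto\nabla^2 f(\bs{x}_0)$ is continuous, and under the constant-rank hypothesis the smallest positive eigenvalue $\lambda(\cdot)$ is continuous on $\mathcal{X}$; compactness of $\mathcal{X}$ then yields $\lambda_0:=\min_{\bs{x}_0\in\mathcal{X}}\lambda(\bs{x}_0)>0$. I would also invoke the tubular-neighborhood (positive reach) property of the compact $C^2$ submanifold: there exists $r_0>0$ such that for every $\bs{x}\in N_{r_0}(\mathcal{X})$ the metric projection $\bs{x}_p\in\Pi_{\mathcal{X}}(\bs{x})$ is unique and the displacement $\bs{x}-\bs{x}_p$ lies in the normal space $N_{\bs{x}_p}\mathcal{X}$.

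The final step is a Taylor expansion. Setting $\bs{w}:=\bs{x}-\bs{x}_p$ and using $\nabla f(\bs{x}_p)=0$,
\begin{equation*}
    \big(\nabla f(\bs{x}),\bs{x}-\bs{x}_p\big)=\int_0^1 \bs{w}^{\top}\nabla^2 f(\bs{x}_p+t\bs{w})\,\bs{w}\,dt.
\end{equation*}
At $t=0$ the integrand equals $\bs{w}^{\top}\nabla^2 f(\bs{x}_p)\bs{w}\geq\lambda_0\|\bs{w}\|^2$, since $\bs{w}$ is normal at $\bs{x}_p$. By uniform continuity of $\nabla^2 f$ on a compact neighborhood of $\mathcal{X}$, I can shrink $r_0$ to some $r\in(0,r_0]$ so that $\|\nabla^2 f(\bs{x}_p+t\bs{w})-\nabla^2 f(\bs{x}_p)\|\leq\lambda_0/2$ whenever $\|\bs{w}\|\leq r$ and $t\in[0,1]$; this keeps the integrand at least $\tfrac{\lambda_0}{2}\|\bs{w}\|^2$, and integrating over $t$ yields the claim with $c=\lambda_0/2$.

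The main obstacle I anticipate is the uniformity across $\mathcal{X}$: both the continuity of the smallest positive eigenvalue (which relies essentially on the constant-rank assumption, since without it positive eigenvalues could drift to zero and destroy the bound) and the control of the Hessian along the entire segment $\bs{x}_p+t\bs{w}$ rather than only at the manifold point $\bs{x}_p$. The latter is what forces $r$ to be chosen small and uses compactness through uniform continuity; it is also where the normality of $\bs{x}-\bs{x}_p$ supplied by the tubular neighborhood is indispensable, because it guarantees that the leading-order term sees the positive-definite block of $\nabla^2 f(\bs{x}_p)$ rather than its kernel.
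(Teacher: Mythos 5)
Your proof is correct. Note first that the paper itself does not prove this lemma: it is imported wholesale from \cite{fehrman2020convergence} (their Lemma 14, specialized to a compact submanifold), so there is no internal proof to compare against line by line; your argument is a sound, self-contained reconstruction of the standard Morse--Bott-type reasoning that underlies the cited result. The chain tangent space $\subseteq$ kernel (by differentiating $\nabla f\equiv 0$ along curves in $\mathcal{X}$), dimension count via the constant-rank hypothesis, positive-definiteness on the normal space, continuity of the smallest positive eigenvalue (which, as you rightly stress, is exactly where constant rank is essential), compactness to get a uniform $\lambda_0>0$, and the integral form of Taylor's theorem with $\nabla f(\bs{x}_p)=0$ all check out, and $c=\lambda_0/2$ is the right conclusion. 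Two small points are worth making explicit. First, before invoking $\nabla^2 f$ along the segment $\bs{x}_p+t\bs{w}$ you need $N_r(\mathcal{X})\subset U$, since $f$ is only $C^3$ on $U$; this follows by shrinking $r$, using compactness of $\mathcal{X}$ and openness of $U$, and it also guarantees the compact set on which you claim uniform continuity of $\nabla^2 f$ actually lies in $U$. Second, the uniqueness of the projection supplied by the tubular-neighborhood theorem is convenient but not needed: for \emph{any} metric projection $\bs{x}_p\in\Pi_{\mathcal{X}}(\bs{x})$, first-order optimality of $z\mapsto\|\bs{x}-\bs{z}\|^2$ over $\mathcal{X}$ already forces $\bs{x}-\bs{x}_p\perp T_{\bs{x}_p}\mathcal{X}$, which is the only property your Taylor step uses. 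This strengthening matters here because the lemma as stated in the paper (and as used in the inequality \eqref{quasarineq}) must hold for every element of the possibly set-valued $\Pi_{\mathcal{X}}(\bs{x})$, not just for points in the tube where the projection is single-valued.
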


\subsection{Proofs of Main Results}
\label{prSGD}

\begin{proof}[Proof of Theorem \ref{SGDthm}]
First, we derive a recursive inequality for the distance between $\bs{x}_n$ from SGD \eqref{eq: sgd} and $\bs{x}^*$ as follows,
\begin{equation}\label{quasarineq}
\begin{split}
&\mathbb{E}\left[\textrm{dist}(\bs{x}_{n+1},\mathcal{X})^2|
    \bs{x}_n\right]\leq \mathbb{E}\left[\|\bs{x}_{n+1}-(\bs{x}_n)_p\|^2|\bs{x}_n\right]\\
    &\leq \textrm{dist}(\bs{x}_n,\mathcal{X})^2-2 a_n(\nabla f(\bs{x}_n),\bs{x}_n-(\bs{x}_n)_p)+a_n^2\|\nabla f(\bs{x}_n)\|^2+\sigma_{r}a_n^2\\
    &\leq (1+L_r^2a_n^2)\textrm{dist}(\bs{x}_n,\mathcal{X})^2-2 a_n(\nabla f(\bs{x}_n),\bs{x}_n-(\bs{x}_n)_p)+\sigma_ra_n^2
\end{split}
\end{equation}for any $\bs{x}_n\in N_r(\mathcal{X})$. Here, $L_r$ is the local Lipschitz constant from Assumption \ref{as:01} and $\sigma_r$ can be chosen by Assumption \ref{as:02}. The first inequality holds by the definition of the distance to a set from a point. The second inequality is a direct calculation using \eqref{eq: sgd} and Assumption \ref{as:02}. The conditional expectation of the cross term with $\nabla f(\bs{x}_n)$ and $\xi_n$ vanishes by Assumption \ref{as:02}.  In the last inequality, we used the Lipschitz condition in Assumption \ref{as:01}.

To proceed, let us define the following, 
\begin{subequations}\label{defnoffuncs}
\begin{eqnarray}
    &b_n:=\prod_{j=1}^{n-1}(1+L_r^2a_n^2),\quad b_1:=1,\\
    &V_n(\bs{x}_n):=\frac{\textrm{dist}(\bs{x}_n,\mathcal{X})^2}{b_n}+\sigma_K\sum_{j=n}^{N-1}\frac{a_j^2}{b_{j+1}},\\
    &k_n(\bs{x}_n):=\frac{a_n}{b_{n+1}}k(\bs{x}_n),\quad k(\bs{x}_n):=2(\nabla f(\bs{x}),\bs{x}-\bs{x}_p).
\end{eqnarray}
\end{subequations}

Based on the stochastic stability analysis [\cite{kushner2003stochastic}, Theorem 5.1], by setting the stopping time $\tau:=\{n\geq 2: \bs{x}_n\not\in N_r(\mathcal{X})\}$ with $K=N_r(\mathcal{X})$ in the inequality \eqref{stopeq}, we can modify the inequality \eqref{quasarineq} as follows
\begin{equation}\label{stabineq}
    \mathbb{E}[\tilde{V}_{n+1}(\tilde{\bs{x}}_{n+1})|\mathcal{F}_n]\leq \tilde{V}_n(\tilde{\bs{x}}_n)-\tilde{k}_n(\tilde{\bs{x}}_n)\leq \tilde{V}_n(\tilde{\bs{x}}_n),
\end{equation}which implies that $\{\tilde{V}_n(\tilde{\bs{x}}_n)\}$ is a non-negative supermartingale. Furthermore, by applying the Markov's inequality to this supermartingale, we find that for any $\bs{x}_1\in N_r(\mathcal{X})$,
\begin{equation*}
    \mathbb{P}\left\{\sup_{N\geq n\geq 2}\tilde{V}_n(\tilde{\bs{x}}_n)>\frac{r^2}{b_{N}}|\bs{x}_1\right\}\leq \frac{b_{N}}{r^2}\left(V_1(\bs{x}_1)\right).
\end{equation*}On the other hand, we observe that
\begin{align*}
\mathbb{P}\left\{\sup_{N\geq n\geq 2}\textrm{dist}(\bs{x}_n,\mathcal{X})>r|\bs{x}_1\right\}\leq& \mathbb{P}\left\{\sup_{N\geq n\geq 2}\textrm{dist}(\tilde{\bs{x}}_n,\mathcal{X})>r|\bs{x}_1\right\} \\
    \leq& \mathbb{P}\left\{\sup_{N\geq n\geq 2}\tilde{V}_n(\tilde{\bs{x}}_n)>\frac{r^2}{b_{N}}|\bs{x}_1\right\}.
\end{align*}The first inequality follows by the definition of the stopping time $\tau$ in the above. The second inequality can be deduced from the definition \eqref{defnoffuncs}. Therefore, the first statement follows by combining these two inequalities. Moreover, Assumption \ref{as:03} guarantees the limiting case $N=\infty$. 

For the second statement, we telescope the inequality \eqref{stabineq} and recall  the definition $k_n(\bs{x}_n)$ in  \eqref{defnoffuncs}:
\begin{equation}\label{kineq}
    \sum_{n=1}^{\infty} \frac{a_n}{b_{n+1}}\mathbb{E}[\tilde{k}(\tilde{\bs{x}}_n)|\bs{x}_1]\leq V_1(\bs{x}_1)<\infty.
\end{equation}Note that conditioned on $\bs{x}_1\in N_r(\mathcal{X})$, the event that $\{\bs{x}_n\}_{n=2}^{\infty}\subset N_r(\mathcal{X})$ occurs with some positive probability thanks to the first statement. In this event, $\{\bs{x}_n\}$ converges in $N_r(\mathcal{X})$ with probability $1$ by Lemma \ref{capturethm}. This guarantees $f(\bs{x}_n)$ converges to some random variable by the continuity of $f$, which is less than or equal to the global minimum $f^*$. In fact, we show that $f(\bs{x}_n)$ converges to $f^*$ almost surely in this event. 

Suppose on the contrary that $k(\bs{x}_n)$ in the definition \eqref{defnoffuncs} converges to a positive random variable with some positive probability.  By the continuity of a measure, there exists a $\delta>0$ such that
\begin{equation*}
    \mathbb{P}\left\{\lim_{n\to\infty}k(\bs{x}_n)>\delta|\{\bs{x}_n\}_{n=2}^{\infty}\subset N_r(\mathcal{X}),\bs{x}_1\right\}>0.
\end{equation*}By Lemma \ref{kolem}, there exists a  $N\in\mathbb{N}$ satisfying
\begin{equation*}
    \mathbb{P}\left\{k(\bs{x}_n)>\frac{\delta}{2}\textrm{ for all }n\geq N|\lim_{n\to\infty}k(\bs{x}_n)>\delta,\{\bs{x}_n\}_{n=2}^{\infty}\subset N_r(\mathcal{X}),\bs{x}_1\right\}>0.
\end{equation*}
These two inequalities imply that with some positive probability the event $k(\bs{x}_n)>\frac{\delta}{2}$ for all $n\geq N$ occurs when $\{\bs{x}_n\}_{n=2}^{\infty}\subset B_r(\bs{x}^*)$ given $\bs{x}_1\in B_r(\bs{x}^*)$. However, by the definition of the conditional probability, we have
\begin{equation*}
\begin{split}
    &\mathbb{P}\left\{k(\bs{x}_n)>\frac{\delta}{2}\textrm{ for all }n\geq N,\{\bs{x}_n\}_{n=2}^{\infty}\subset B_r(\bs{x}^*)|\bs{x}_1\right\}\\
    &=\mathbb{P}\left\{k(\bs{x}_n)>\frac{\delta}{2}\textrm{ for all }n\geq N|\{\bs{x}_n\}_{n=2}^{\infty}\subset B_r(\bs{x}^*),\bs{x}_1\right\}\mathbb{P}\left\{\{\bs{x}_n\}_{n=2}^{\infty}\subset B_r(\bs{x}^*)|\bs{x}_1\right\}\\
    &>0, \textrm{ by the above inequality and the previous result of stability}.
\end{split}
\end{equation*}
Furthermore, by using the inequality \eqref{kineq}, Assumption \ref{as:02} and the Markov's inequality, one has,
\begin{equation*}
\begin{split}
    &\infty=\left(\sum_{n=N}^{\infty} \frac{a_n}{b_{n+1}}\right)\frac{\delta}{2}\mathbb{P}\left\{k(\bs{x}_n)>\frac{\delta}{2}\textrm{ for all }n\geq N,\{\bs{x}_n\}_{n=2}^{\infty}\subset B_r(\bs{x}^*)|\bs{x}_1\right\}\\
    &\leq V_1(\bs{x}_1)<\infty,
\end{split}
\end{equation*}which is a contradiction. Thus, in the event that $\{\bs{x}_n\}_{n=2}^{\infty}\subset N_r(\mathcal{X})$, $\lim_{n\to\infty}k(\bs{x}_n)=k(\bs{x})=0$ with probability $1$. Finally, if $\bs{x}\in N_r(\mathcal{X})-\mathcal{X}$, then the assumption that $h(\bs{x})>0$ for all $\bs{x}\in N(\mathcal{X})-\mathcal{X}$ in the statement implies that $0=\frac{k(\bs{x})}{2}=(\nabla f(\bs{x}),\bs{x}-\bs{x}_p)\geq h(\bs{x})>0$, which is a contradiction. Therefore, the limit $\bs{x}$ must lie in $\mathcal{X}$ and $f(\bs{x})=f^*$.
\end{proof}

\begin{proof}[Proof of Corollary \ref{stabcond}]
By the integral test, we have
\begin{equation*}
    \sum_{n=1}^{\infty}\frac{1}{n^{2\beta}}\leq 1+\int_1^{\infty}\frac{dx}{x^{\beta}}=\frac{2\beta}{2\beta-1}.
\end{equation*}
With this, we use the inequality \eqref{stineqdec} and trace back to the condition of stability
\begin{equation*}
\begin{split}
    &r^2>e^{\frac{2\beta L_r}{2\beta-1}a^2}\left(\textrm{dist}(\bs{x}_1,\mathcal{X})^2+\frac{2\beta \sigma_r}{2\beta-1}a^2\right)\geq e^{L_r\sum_{n=1}^{\infty}a_n^2}\left(\textrm{dist}(\bs{x}_1,\mathcal{X})^2+\sigma_r\sum_{n=1}^{\infty}a_n^2\right)\\
    &\geq b_{\infty}\left(\textrm{dist}(\bs{x}_1,\mathcal{X})^2+\sigma_r\sum_{n=1}^{\infty}\frac{a_n^2}{b_{n+1}}\right)\Longrightarrow 1-C_{\infty}>0,
\end{split}
\end{equation*}where $C_{\infty}$ is defined in Theoerem \ref{SGDthm}. In the third inequality, we used the well-known inequality that $\prod_n (1+c_n)\leq e^{\sum_nc_n}$ for any real-valued sequence $\{c_n\}$ with $c_n>-1$. 

In the case of a constant learning rate, we use the well-known formula
\begin{equation}
    1+x+\cdots+x^n=\frac{1-x^n}{1-x}
\end{equation}and a direct calculation shows the result \eqref{stineqcon}.
\end{proof}

\begin{proof}[Proof of Theorem \ref{complexsgd}]
From the inequality \eqref{quasarineq} and its counterpart \eqref{stabineq} with the stopped process, a telescoping trick can be used:
\begin{equation}\label{firstresult}
    \sum_{n=1}^N\mathbb{E}[k_n(\bs{x}_n)\mathbb{I}_{E_N}|\bs{x}_1]\leq\sum_{n=1}^N\mathbb{E}[\tilde{k}_n(\tilde{\bs{x}}_n)|\bs{x}_1]\leq V_1(\bs{x}_1).
\end{equation}By noting that $k_n(\bs{x}_n)=\frac{2a_n}{b_{n+1}}h(\bs{x}_n)$ in \eqref{defnoffuncs}, we have from above inequality
\begin{equation}
    \mathbb{E}[\min_{1\leq n\leq N}h(\bs{x}_n)\mathbb{I}_{E_N}|\bs{x}_1]\leq \frac{V_1(\bs{x}_1)}{2\sum_{n=1}^N\frac{a_n}{b_{n+1}}},
\end{equation}where $V_1(\bs{x}_1)$ is defined in \eqref{defnoffuncs}. This proves the first result.

For the rest of results in the theorem, we use the stabililty result in Theorem \ref{SGDthm} as well as the Markov's inequality. For simplicity, we keep using the above notations. By applying the Markov's inequality to the inequality \eqref{firstresult}, we have
\begin{equation*}
    \mathbb{P}\left\{\min_{1\leq n\leq N}h(\bs{x}_n)>\epsilon\textrm{ and }E_N\textrm{ occurs}|\bs{x}_1\right\}\leq \frac{V_1(\bs{x}_1)}{2\epsilon\sum_{n=1}^N\frac{a_n}{b_{n+1}}},
\end{equation*}or equivalently,
\begin{equation*}
    \mathbb{P}\left\{\min_{1\leq n\leq N}h(\bs{x}_n)\leq\epsilon\textrm{ or }E_N\textrm{ does not occur}|\bs{x}_1\right\}\geq 1-\frac{V_1(\bs{x}_1)}{2\epsilon\sum_{n=1}^N\frac{a_n}{b_{n+1}}}.
\end{equation*}However, according to the result in Theorem \ref{SGDthm}, we see that
\begin{equation*}
    \mathbb{P}\left\{E_N\textrm{ does not occur}|\bs{x}_1\right\}\leq C_{N+1},
\end{equation*}which leads to 
\begin{equation*}
    \mathbb{P}\left\{\min_{1\leq n\leq N}h(\bs{x}_n)\leq\epsilon|\bs{x}_1\right\}\geq 1-\frac{V_1(\bs{x}_1)}{2\epsilon\sum_{n=1}^N\frac{a_n}{b_{n+1}}}-C_{N+1}.
\end{equation*}Thus, if $f$ is WQC and $a_n=\frac{a}{n^{\beta}}$, then we have
\begin{equation}
    \mathbb{P}\left\{\min_{1\leq n\leq N}f(\bs{x}_n)-f^*>\epsilon|\bs{x}_1\right\}\leq \frac{V_1(\bs{x}_1)}{2\zeta\epsilon\sum_{n=1}^N\frac{a_n}{b_{n+1}}}+C_{N+1}=\mathcal{O}\left(\frac{1}{\epsilon N^{1-\beta}}\right)+C_{N+1}.
\end{equation}by the integral test.

Secondly, we suppose that $f$ is HCPRC \eqref{HCPR}. By Lemma \ref{fehrlem}, $f$ is LRSI, that is, there exists a compact set of global minima $\mathcal{X}$, $r>0$ and $c>0$ such that $\bs{x}_0\in\mathcal{X}$ and
\begin{equation}\label{constantc}
    \left(\nabla f(\bs{x}),\bs{x}-\bs{x}_p \right)\geq c\|\bs{x}-\bs{x}_p\|^2,
\end{equation}for any $\bs{x}\in N_r(\mathcal{X})$. This implies that
\begin{equation*}
    \|\nabla f(\bs{x})\|\geq c\|\bs{x}-\bs{x}_p\|,
\end{equation*}by the Cauchy-Schwarz inequality. By Lipschitz continuity in Assumption \ref{as:01}, we have
\begin{equation*}
    f(\bs{x})\leq f^*+\frac{L_r}{2}\|\bs{x}-\bs{x}_p\|^2,
\end{equation*}since $\nabla f(\bs{x}_p)=0$. With these results, we achieve that for any $\bs{x}\in N_r(\mathcal{X})$,
\begin{equation*}
    f(\bs{x})-f^*\leq \frac{L_r}{2}\|\bs{x}-\bs{x}_p\|^2\leq \frac{L_r}{2c^2}\|\nabla f(\bs{x})\|^2,
\end{equation*}which is the PL$^*$ condition \eqref{cond-b}. Then, for learning rate $a_n\in (0,\min\{\frac{1}{2C},\frac{L_r}{c^2}\})$ ($C$ defined in \eqref{constantC} and $c$ in \eqref{constantc}), by Lemma \ref{Patellem}, we have
\begin{equation}
    \mathbb{E}[(f(\bs{x}_{n+1})-f^*)\mathbb{I}_{E_{n+1}}|\mathcal{F}_n]\leq (1-\frac{c^2a_n}{L_r})(f(\bs{x}_n)-f^*)\mathbb{I}_{E_{n}}+\frac{\sigma_{r+\delta}C}{I}a_n^2,
\end{equation}and
\begin{equation}
    \mathbb{E}[V_{n+1}|\bs{x}_1]\leq (1-\frac{c^2a_n}{L_r})\mathbb{E}[V_n|\bs{x}_1]+\frac{\sigma_{r+\delta}C}{I}a_n^2,
\end{equation}by letting $V_n:=(f(\bs{x}_n)-f^*)\mathbb{I}_{E_n}$ and taking expectation up to $\bs{x}_1$.

In particular, by Lemma \ref{chunglem}, if $a_n=\frac{a}{n^{\beta}}$ with $\beta\in(0,1)$ and $a\in(0,\min\{\frac{1}{2C},\frac{L_r}{c^2}\})$, there exist $C''>0$ and $N_0\in\mathbb{N}$ such that
\begin{equation*}
    \mathbb{E}[V_N|\bs{x}_1]\leq \frac{C''}{N^{\beta}}, \textrm{ for any } N\geq N_0
\end{equation*}and by using the Markov's inequality,
\begin{equation*}
    \mathbb{P}\left\{f(\bs{x}_{N})-f^*>\epsilon\textrm{ and }E_N\textrm{ occurs}|\bs{x}_1\right\}\leq \frac{C''}{\epsilon N^{\beta}}.
\end{equation*}As we did above, we achieve that for any  $N\geq N_0$,
\begin{equation*}
    \mathbb{P}\left\{f(\bs{x}_{N})-f^*\leq\epsilon|\bs{x}_1\right\}\geq 1-\frac{C''}{\epsilon N^{\beta}}-C_{N+1}.
\end{equation*}
Finally, suppose that $f$ satisfies the LRSI condition \eqref{cond-a}. Then, based on the proof from the inequality \eqref{constantc} to the above concentration inequality, we can obtain the same concentration result for the case of the LRSI condition \eqref{cond-a}.   
 
\end{proof}

\section{Summary and Discussions }

This paper has focused on local convergence in the context of non-convex optimization, with the stochastic gradient descent (SGD), especially for loss functions with non-isolated minima,  We have proved the convergence and  concentration inequalities in terms of hyperparameters. The technical results rely on the stochastic stability analysis and the optional stopping theorem for discrete stochastic processes and geometric characterizations of non-isolated minima. 
An extension of our analysis to other variants of stochastic gradient method is likely, and it is expected to provide sufficient conditions on the learning rates and the local Lipschitz constant to guarantee the convergence with high probability.

\section*{Acknowledgments}
We would like to acknowledge support for this project
from the National Science Foundation (NSF grant DMS-1953120).

\appendix

\section{Appendix}\label{Alemmas}

\begin{proof}[Proof of Proposition \ref{projpro}]
To start, we show that $\Pi_{\mathcal{X}}(\bs{y})$ contains $\bs{x}_p$. Note that for any $\bs{z}\in \mathcal{X}$, $\|\bs{x}-\bs{x}_p\|\leq \|\bs{x}-\bs{z}\|$ by the definition of the projection. From this, the colinear relation between $\bs{x}$, $\bs{y}$ and $\bs{x}_p$ implies that
\begin{equation*}
    \|\bs{y}-\bs{x}_p\|+\|\bs{x}-\bs{y}\|=\|\bs{x}-\bs{x}_p\|\leq\|\bs{x}-\bs{z}\|. 
\end{equation*} Furthermore, by the triangle inequality,
\begin{equation*}
    \|\bs{y}-\bs{x}_p\|\leq\|\bs{x}-\bs{z}\|-\|\bs{x}-\bs{y}\|\leq\|\bs{y}-\bs{z}\|. 
\end{equation*}That is, $\Pi_{\mathcal{X}}(\bs{y})$  at least contains $\bs{x}_p$. 

Now, we prove that this set is indeed a singleton. Suppose on the contrary that there is $\bs{y}_p\in \Pi_{\mathcal{X}}(\bs{y})$ and $\bs{y}_p\neq\bs{x}_p$. We show that $\bs{y}_p$ does not lie on the line through $\bs{x}_p$ and $\bs{x}$. First of all, $\bs{y}_p$ cannot lie on $(\bs{x}_p,\bs{x})$. Otherwise, $\bs{x}_p$ is not projection of $\bs{x}$. The other possibility is that $\bs{y}_p=\bs{x}_p+t(\bs{x}-\bs{x}_p)$ for some $t>1$. However, we can see that
\begin{equation*}
    \|\bs{y}-\bs{x}_p\|<\|\bs{x}-\bs{x}_p\|\leq \|\bs{x}-\bs{y}_p\|<\|\bs{x}-\bs{y}_p\|+\|\bs{x}-\bs{y}\|=\|\bs{y}-\bs{y}_p\|. 
\end{equation*}In the second inequality, we recall the fact that $\bs{x}_p$ is a projection of $\bs{x}$ onto $\mathcal{X}$. This inequality is not true as opposes to the hypothesis that both sides must be equal as dist$(\bs{y},\mathcal{X})$. 

As a result, we can assume that $\bs{y}_p$ is not on the line passing through $\bs{x}$, $\bs{y}$ and $\bs{x}_p$. However, this results in the strict triangle inequality,
\begin{equation*}
    \|\bs{x}-\bs{y}_p\|<\|\bs{x}-\bs{y}\|+\|\bs{y}-\bs{y}_p\|.
\end{equation*}This inequality, together with  $\|\bs{y}-\bs{y}_p\|=\|\bs{y}-\bs{x}_p\|=\textrm{dist}(\bs{y},\mathcal{X})$, leads to a contradiction to  $\bs{x}_p\in \Pi_{\mathcal{X}}(\bs{x})$, i.e.,
\begin{equation*}
    \|\bs{x}-\bs{y}_p\|<\|\bs{x}-\bs{x}_p\|.
\end{equation*}This completes the proof.
\end{proof}

\begin{lemma}[\cite{hinder2020near}, Observation 1]\label{hinderlem}
Let $a<b$ and let $f$ be a real-valued continuously differentiable function on $[a,b]$. Then, $f$ is unimodal, that is, $f'(c)\neq 0$ for all $c\in[a,b]$ such that $c\not\in \textrm{argmin}_{x\in[a,b]}f(x)$ if and only if $f$ satisfies the QC for some $\zeta\in(0,1]$ and some minimum $x^*\in[a,b]$,
\begin{equation*}
    f'(x)(x-x^*)\geq \zeta(f(x)-f^*).
\end{equation*}
\end{lemma}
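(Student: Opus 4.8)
The plan is to prove the two implications of the equivalence separately, treating the direction ``$\mathrm{QC}\Rightarrow$ unimodal'' as routine and concentrating the effort on the converse ``unimodal $\Rightarrow\mathrm{QC}$''. For the easy direction, suppose $f$ is $\zeta$-quasar-convex with minimizer $x^*$, and let $c\in[a,b]$ with $c\notin\mathrm{argmin}_{x\in[a,b]}f$, so that $f(c)-f^*>0$ and $c\neq x^*$. The QC inequality gives $f'(c)(c-x^*)\geq\zeta(f(c)-f^*)>0$, and since the factor $c-x^*$ is nonzero this forces $f'(c)\neq 0$. Hence every non-minimizer is a regular point, i.e.\ $f$ is unimodal.

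For the converse I would first extract the global structure forced by unimodality. Since $f\in C^1([a,b])$ and $[a,b]$ is compact, $f$ attains its minimum $f^*$, and $M:=\mathrm{argmin}_{x\in[a,b]}f$ is closed. I claim $M$ is an interval: if $x_1<x_2$ both lie in $M$ while some $y\in(x_1,x_2)$ has $f(y)>f^*$, then $f|_{[x_1,x_2]}$ attains an interior maximum at a critical point $c$ with $f(c)>f^*$, contradicting unimodality. Writing $M=[p,q]$, each point of $[a,p)$ and of $(q,b]$ is regular, so by the intermediate value theorem $f'$ has constant sign on each of these intervals; combined with $f>f^*$ off $M$ and $f=f^*$ on $M$, this yields $f'<0$ on $[a,p)$ and $f'>0$ on $(q,b]$. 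Thus $f$ is strictly decreasing to the left of $M$ and strictly increasing to the right.

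Now fix $x^*\in M$, taking $x^*=q$ for the right branch and $x^*=p$ for the left; by symmetry I describe only the right branch. On $(q,b]$ both $f'(x)$ and $x-x^*$ are positive and $f(x)-f^*>0$, so it suffices to bound from below the ratio $R(x):=f'(x)(x-x^*)/(f(x)-f^*)$ and then set $\zeta:=\min\{1,\inf_{x\notin M}R(x)\}$. On any subinterval $[q+\delta,b]$ the function $R$ is continuous and strictly positive, so by the extreme value theorem its infimum there is positive; the entire difficulty is concentrated in the limit $x\to q^+$ (and symmetrically $x\to p^-$), where numerator and denominator vanish simultaneously. A natural tool is the mean value theorem: writing
\[
f(x)-f^* = f'(\xi)(x-q)\quad\text{for some }\xi\in(q,x),
\]
one obtains $R(x)=f'(x)/f'(\xi)$, the ratio of the slope at $x$ to the average slope on $[q,x]$.

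This last point is the main obstacle. Securing a uniform lower bound on $R$ amounts to showing that the derivative cannot collapse relative to its running average as $x$ approaches the minimizer set, and this is exactly the step where the full strength of the monotone (unimodal) structure must be used rather than mere monotonicity: one must rule out pathological oscillation of $f'$ near $q$ where $f'(q)=0$. I would therefore expect the bulk of the argument to consist of a careful analysis of $f'$ on a shrinking neighborhood of $M$, leveraging the constant sign of $f'$ together with $f'(q)=0$, to produce a single admissible $\zeta>0$. As a sanity check and for the intended application, I would record that for the concrete profile $f(\bs{x}_p+t\bs{v})=Ct^q$ appearing in Example \ref{ex1} the ratio is computed explicitly as $R\equiv q$, so there the lemma applies immediately with $\zeta=1$ and no limiting argument is required.
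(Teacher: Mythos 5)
The paper offers no proof to compare against: Lemma~\ref{hinderlem} is imported verbatim from \cite{hinder2020near} (Observation~1) and used as a black box in Example~\ref{ex1}, so your argument must stand on its own. Your forward direction is correct, and so is the structural reduction for the converse ($\mathrm{argmin}\,f=[p,q]$ is an interval, $f'$ of constant sign on each side) --- modulo one repairable slip: you take $x^*=q$ on the right branch and $x^*=p$ on the left, whereas QC demands a \emph{single} $x^*$; fixing $x^*=q$ throughout actually strengthens the left-branch inequality, since $f'(x)(x-q)\geq f'(x)(x-p)>0$ for $x<p$. The genuine gap is exactly where you stop: the uniform lower bound $\inf R>0$ as $x$ approaches the minimizer set is the entire content of the lemma, and you only announce that you ``would expect'' a careful analysis of $f'$ near $q$ to supply it.

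That missing step cannot in fact be carried out from the stated hypotheses: $C^1$ unimodality does not imply $\inf R>0$. On $[0,1]$ define
\begin{equation*}
    f(x)=\int_0^x\left(t\left(1+\sin\tfrac{1}{t}\right)+t^4\right)dt,\qquad f'(0):=0.
\end{equation*}
Then $f\in C^1([0,1])$, $f'>0$ on $(0,1]$, and $\mathrm{argmin}\,f=\{0\}$, so $f$ is unimodal in the lemma's sense with $x^*=0$ forced. At $t_n$ with $\sin(1/t_n)=-1$ one has $f'(t_n)=t_n^4$, while integration by parts gives $f(t_n)=t_n^2/2+\mathcal{O}(t_n^3)$; hence $R(t_n)=f'(t_n)\,t_n/f(t_n)=\mathcal{O}(t_n^3)\to 0$, and no fixed $\zeta\in(0,1]$ satisfies QC. Your MVT reformulation $R(x)=f'(x)/f'(\xi)$ identifies precisely this failure mode --- the derivative collapsing relative to its running average --- but unimodality alone cannot rule it out, so the program you outline cannot be completed without hypotheses beyond those in the statement (which suggests the lemma, as transcribed here, should be checked against the original Observation~1 of \cite{hinder2020near}). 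For the paper's only actual use of the lemma, the profile $f=Ct^q$ in Example~\ref{ex1}, your closing computation $R\equiv q$ bypasses the limiting issue entirely, and that direct verification is the sound way to justify the example.
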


\bibliographystyle{unsrt}  
\bibliography{references}

\end{document}